\newcommand{\Noise}{Z}
\newcommand{\noise}{z}
\newcommand{\E}{\mathbb{E}}
\newcommand{\bbH}{\mathbb{H}}
\newcommand{\mcB}{\mathcal{B}}
\newcommand{\mcL}{\mathcal{L}}
\newcommand{\mcX}{\mathcal{X}}
\newcommand{\trans}{\mathsf{T}}
\newcommand{\reals}{\mathbb{R}}
\def\argmax{\operatornamewithlimits{arg\,max}}
\def\argmin{\operatornamewithlimits{arg\,min}}
\newcommand{\distNormal}{\mathcal{N}}
\newcommand{\distBernoulli}{\mathrm{Bern}}
\newcommand{\distCategorical}{\mathrm{Cat}}
\newcommand{\bbI}{\mathbb{I}}
\newcommand{\bbE}{\mathbb{E}}
\newcommand{\bone}{\boldsymbol{1}}
\newcommand{\iid}[1]{\stackrel{\text{iid}}{#1}}
\newcommand{\given}{\, | \,}
\DeclareMathOperator{\diag}{diag}
\newacronym{KL}{kl}{Kullback-Leibler}
\newacronym{ELBO}{elbo}{\emph{evidence lower bound}}
\newacronym{POPELBO}{pop-elbo}{\emph{population evidence lower bound}}
\newacronym{SVI}{svi}{stochastic variational inference}
\newacronym{BUMPVI}{bump-vi}{bumping variational inference}
\newacronym{GMM}{gmm}{Gaussian mixture model}
\newacronym{LDA}{lda}{latent Dirichlet allocation}
\newacronym{SUTVA}{sutva}{stable unit treatment value assumption}
\DeclareRobustCommand{\parhead}[1]{\textbf{#1}~}
\begin{document}

%
\runningtitle{Reparameterizing the Birkhoff Polytope for
  Variational Permutation Inference}

%
\runningauthor{Linderman, Mena, Cooper, Paninski, and Cunningham}

\twocolumn[

\aistatstitle{Reparameterizing the Birkhoff Polytope for \\
  Variational Permutation Inference}

\aistatsauthor{
  Scott W. Linderman$^*$
  \And Gonzalo E. Mena$^*$
  \And  Hal Cooper}
\aistatsaddress{ Columbia University \And Columbia University \And Columbia University}
\aistatsauthor{Liam Paninski \And John P. Cunningham }
\aistatsaddress{Columbia University \And Columbia University}


]

\begin{abstract}
  Many matching, tracking, sorting, and ranking problems require
  probabilistic reasoning about possible permutations, a set that
  grows factorially with dimension. Combinatorial optimization
  algorithms may enable efficient point estimation, but fully Bayesian
  inference poses a severe challenge in this high-dimensional,
  discrete space.  To surmount this challenge, we start with the usual
  step of relaxing a discrete set (here, of permutation matrices) to
  its convex hull, which here is the Birkhoff polytope: the set of all
  doubly-stochastic matrices.  We then introduce two novel
  transformations: first, an invertible and differentiable
  stick-breaking procedure that maps unconstrained space to the
  Birkhoff polytope; second, a map that rounds points toward the
  vertices of the polytope.  Both transformations include a temperature
  parameter that, in the limit, concentrates the densities on
  permutation matrices.  We then exploit these transformations and
  reparameterization gradients to introduce variational inference over
  permutation matrices, and we demonstrate its utility in a series of 
  experiments.
\end{abstract}

\section{Introduction}
{\let\thefootnote\relax\footnote{$^*$These authors contributed equally.}}

Permutation inference is central to many modern machine learning
problems.  Identity management ~\citep{guibas2008identity} and
multiple-object tracking~\citep{shin2005lazy, kondor2007multi} are
fundamentally concerned with finding a permutation that maps an
observed set of items to a set of canonical labels.  Ranking problems,
critical to search and recommender systems, require inference over the
space of item orderings \citep{meilua2007consensus, lebanon2008non,
  adams2011ranking}.  Furthermore, many probabilistic models, like
preferential attachment network models~\citep{bloem2016random} and
repulsive point process models~\citep{rao2016bayesian}, incorporate a
latent permutation into their generative processes; inference over
model parameters requires integrating over the set of permutations
that could have given rise to the observed data.  In neuroscience,
experimentalists now measure whole-brain recordings in
\textit{C. Elegans}~\citep{Kato2015, nguyen2016whole}, a model
organism with a known synaptic network~\citep{white1986structure}; a
current challenge is matching the observed neurons to corresponding
nodes in the reference network.  In Section~\ref{sec:celegans}, we
address this problem from a Bayesian perspective in which permutation
inference is a central component of a larger inference problem involving
unknown model parameters and hierarchical structure.

The task of computing optimal point estimates of permutations under
various loss functions has been well studied in the combinatorial
optimization literature ~\citep{kuhn1955hungarian,
  munkres1957algorithms, lawler1963quadratic}. However, many
probabilistic tasks, like the aforementioned neural identity inference
problem, require reasoning about the posterior distribution over
permutation matrices.  A variety of Bayesian permutation inference
algorithms have been proposed, leveraging sampling methods
\citep{diaconis1988group, miller2013exact, harrison2013importance},
Fourier representations~\citep{kondor2007multi, huang2009fourier}, as
well as convex~\citep{lim2014beyond} and
continuous~\citep{plis2011directional} relaxations for approximating
the posterior distribution.  Here, we address this problem from an
alternative direction, leveraging stochastic variational
inference~\citep{hoffman2013stochastic} and reparameterization
gradients~\citep{rezende2014stochastic, Kingma2014} to derive a
scalable and efficient permutation inference algorithm.

Section~\ref{sec:background} lays the necessary groundwork,
introducing definitions, prior work on permutation inference,
variational inference, and continuous relaxations.
Section~\ref{sec:permutation} presents our primary contribution: a
pair of transformations that enable variational inference over
doubly-stochastic matrices, and, in the zero-temperature limit,
permutations, via stochastic variational inference.  In the process,
we show how these transformations connect to recent work on discrete
variational inference~\citep{maddison2016concrete,
  jang2016categorical, balog2017lost}.  Sections~\ref{sec:synthetic}
and~\ref{sec:celegans} present a variety of experiments that
illustrate the benefits of the proposed variational approach.
Further details are in the supplement.
  
\section{Background}
\label{sec:background}


\subsection{Definitions and notation.}  A permutation is a bijective
mapping of a set onto itself.  When this set is finite, the mapping is
conveniently represented as a binary
matrix~${X \in \{0,1\}^{N \times N}}$ where~${X_{m,n}=1}$ implies that
element~$m$ is mapped to element~$n$.  Since permutations are
bijections, both the rows and columns of~$X$ must sum to one.  From a
geometric perspective, the Birkhoff-von Neumann theorem states that
the convex hull of the set of permutation matrices is the set of
doubly-stochastic matrices; i.e. non-negative square matrices whose
rows and columns sum to one. The set of doubly-stochastic matrices is
known as the \emph{Birkhoff polytope}, and it is defined by,
\begin{align*}
  \mcB_N = \Big \{X : \qquad 
           X_{m,n} &\geq 0   & &\forall \, m,n \in 1, \ldots, N; \\
           \sum_{n=1}^N X_{m,n} &= 1  & &\forall \, m \in 1, \ldots, N; \\
           \sum_{m=1}^N X_{m,n} & =1 &  &\forall \, n \in 1, \ldots, N \Big\}.
\end{align*}
These linear row- and column-normalization constraints
restrict~$\mcB_N$ to a~${(N-1)^2}$ dimensional subset
of~$\reals^{N \times N}$.  Despite these constraints, we have a number
of efficient algorithms for working with these objects.  The
\emph{Sinkhorn-Knopp algorithm}~\citep{sinkhorn1967concerning}
maps the positive orthant onto~$\mcB_N$ by iteratively normalizing
the rows and columns, and the \emph{Hungarian
  algorithm}~\citep{kuhn1955hungarian, munkres1957algorithms} solves
the minimum weight bipartite matching problem---optimizing a linear
objective over the set of permutation matrices---in cubic time.

\subsection{Related Work}
A number of previous works have considered approximate methods of
posterior inference over the space of permutations.  When a point
estimate will not suffice, sampling methods like Markov chain Monte
Carlo (MCMC) algorithms may yield a reasonable approximate posterior
for simple problems~\citep{diaconis1988group}.
\citet{harrison2013importance} developed an importance sampling
algorithm that fills in count matrices one row at a time, showing
promising results for matrices with~$O(100)$ rows and
columns. \citet{li2013efficient} considered using the Hungarian
algorithm within a Perturb-and-MAP algorithm for approximate sampling.
Another line of work considers inference in the spectral domain,
approximating distributions over permutations with the low frequency
Fourier components~\citep{kondor2007multi, huang2009fourier}.  Perhaps
most relevant to this work, \citet{plis2011directional} propose a
continuous relaxation from permutation matrices to points on a
hypersphere, and then use the von Mises-Fisher (vMF) distribution to
model distributions on the sphere's surface.
We will relax permutations to points in the Birkhoff polytope and
derive temperature-controlled densities such that as the
temperature goes to zero, the distribution converges to an atomic
density on permutation matrices.  This will enable efficient variational
inference with the reparameterization trick, which we describe next.

\subsection{Variational inference and the reparameterization trick}
\label{sub:repa}
Given an intractable model with data~$y$, likelihood~$p(y \given x)$,
and prior~$p(x)$, variational Bayesian inference algorithms aim to
approximate the posterior distribution~$p(x \given y)$ with a more
tractable distribution~$q(x; \theta)$, where ``tractable'' means that,
at a minimum, we can sample~$q$ and evaluate it pointwise (including
its normalization constant)~\citep{Blei2017}.  We find this
approximate distribution by searching for the parameters~$\theta$ that
minimize the Kullback-Leibler (KL) divergence between~$q$ and the true
posterior, or equivalently, maximize the evidence lower bound~(ELBO),
\begin{align*}
  \mcL(\theta) &\triangleq \bbE_q \left[ \log p(x, y) - \log q(x; \theta) \right].
\end{align*}
Perhaps the simplest method of optimizing the ELBO is stochastic
gradient ascent.  However, computing~$\nabla_\theta \mcL(\theta)$
requires some care since the ELBO contains an expectation with respect
to a distribution that depends on these parameters.

\begin{figure*}[ht!]
  \centering
  \includegraphics[width=6.5in]{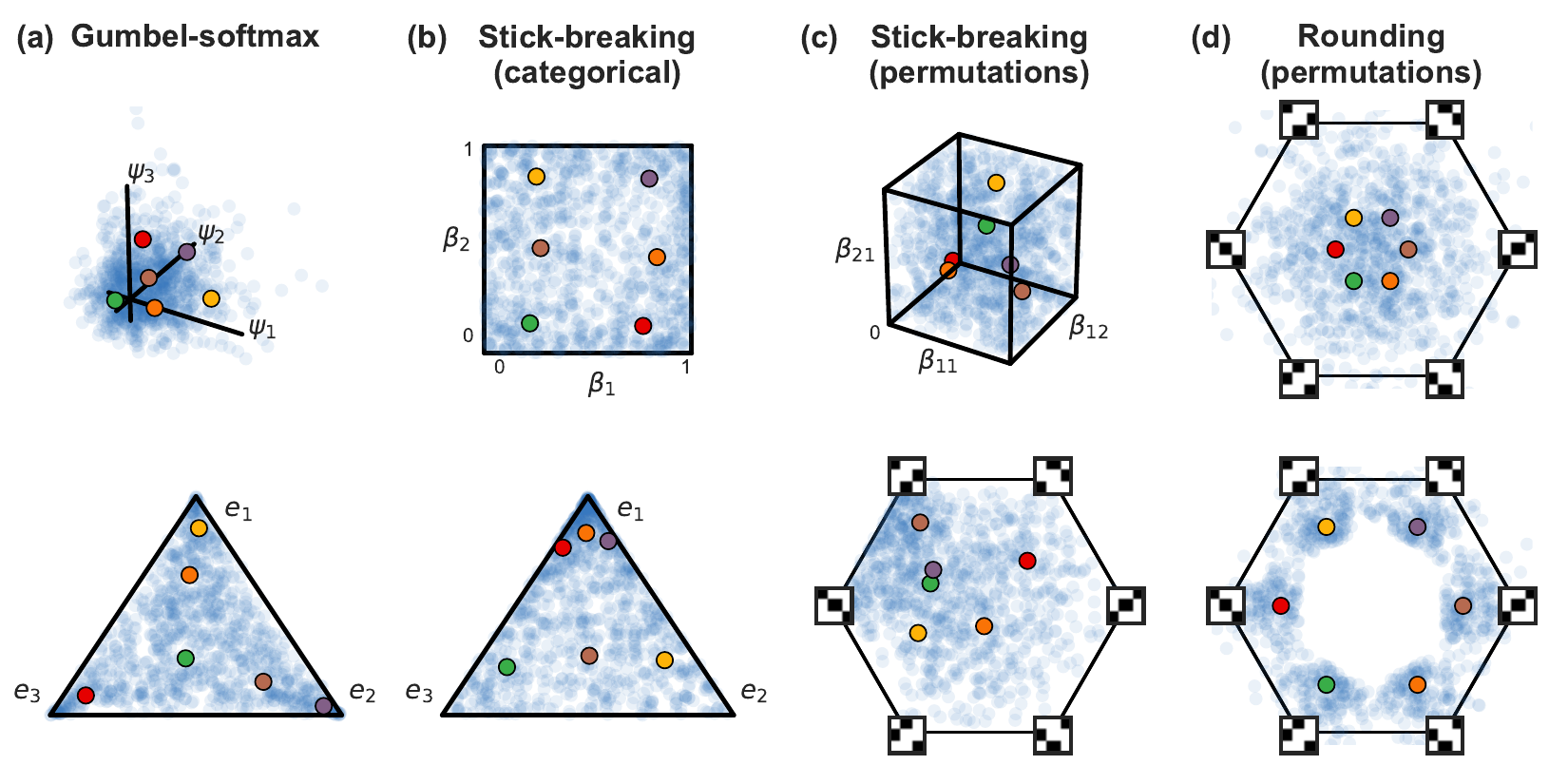} 
  \caption{Reparameterizations of discrete polytopes.  From left to
    right: (a)~The Gumbel-softmax, or ``Concrete'' transformation maps
    Gumbel r.v.'s~${\psi \in \reals^N}$ (blue dots) to points in the
    simplex~${x \in \Delta_{N}}$ by applying the softmax.  Colored
    dots are random variates that aid in visualizing the
    transformation.  (b)~Stick-breaking offers and alternative
    transformation for categorical inference, here from
    points~$\beta \in [0,1]^{N-1}$ to~$\Delta_N$, but the ordering of
    the stick-breaking induces an asymmetry in the transformation.
    (c)~We extend this stick-breaking transformation to reparameterize
    the Birkhoff polytope, i.e. the set of doubly-stochastic
    matrices. We show how~$\mcB_3$ is reparameterized in terms of
    matrices~$B \in [0,1]^{2 \times 2}$ These points are mapped to
    doubly-stochastic matrices, which we have projected
    onto~$\reals^2$ below (stencils show permutation matrices at the
    vertices).  (d)~Finally, we derive a ``rounding'' transformation
    that moves points in~$\reals^{N \times N}$ nearer to the closest
    permutation matrix, which is found with the Hungarian algorithm.
    This is more symmetric, but does not map strictly onto~$\mcB_N$.
  }
\label{fig:transforms}
\end{figure*}

When~$x$ is a continuous random variable, we can sometimes leverage the \emph{reparameterization trick}
\citep{Salimans2013, Kingma2014}.  Specifically, in some cases we can
simulate from~$q$ via the following equivalence,
\begin{align*}
  x &\sim q(x; \theta)
      & \iff & &  
  \noise &\sim r(\noise), \quad x = g(\noise;\theta),
\end{align*}
where~$r$ is a distribution on the ``noise''~$\noise$ and
where~$g(\noise; \theta)$ is a deterministic and differentiable
function.
The reparameterization trick effectively ``factors out'' the randomness
of~$q$. With this transformation, we can bring the gradient inside the
expectation as follows,
\begin{multline}
  \label{eq:elbo}
  \nabla_\theta \mcL(\theta) 
  = \E_{r(\noise)} \Big[ \nabla_\theta \log p(g(\noise; \theta) \given y) \\
    - \nabla_\theta  \log q(g(\noise; \theta); \theta) \Big].
\end{multline}
This gradient can be estimated with Monte Carlo, and, in practice,
this leads to lower variance estimates of the gradient than, for
example, the score function estimator \citep{Williams1992, Glynn1990}.

Critically, the gradients in~\eqref{eq:elbo} can only be computed
if~$x$ is continuous. Recently, \citet{maddison2016concrete} and
\citet{jang2016categorical} proposed the ``Gumbel-softmax'' method for
discrete variational inference. It is based on the following
observation: discrete probability mass functions~$q(x; \theta)$ can be
seen as densities with atoms on the vertices of the simplex; i.e. on
the set of one-hot vectors~${\{e_n\}_{n=1}^N}$,
where~${e_n = (0, 0, \ldots, 1, \ldots, 0)^\trans}$ is a length-$N$ binary vector
with a single 1 in the~$n$-th position. This motivates a natural
relaxation: let $q(x; \theta)$ be a density on the interior of the
simplex instead, and anneal this density such that it converges to an
atomic density on the vertices. Fig.~\ref{fig:transforms}a illustrates
this idea. Gumbel random variates, are mapped through a
temperature-controlled softmax function,
${g_\tau(\psi) = \big[e^{\psi_1 / \tau}/Z, \ldots, e^{\psi_N / \tau}/Z
  \big]}$, where~${Z=\sum_{n=1}^N e^{\psi_n / \tau}}$, to obtain
points in the simplex. As~$\tau$ goes to zero, the density
concentrates on one-hot vectors.  We build on these ideas for
variational permutation inference.

\section{Variational permutation inference via reparameterization}
\label{sec:permutation}
The Gumbel-softmax method scales linearly with the support of the
discrete distribution, rendering it prohibitively expensive for direct
use on the set of~$N!$ permutations.  Instead, we develop two
transformations to map~$O(N^2)$-dimensional random variates to points
in or near the Birkhoff polytope.\footnote{While Gumbel-softmax does
  not immediately extend to permutation inference, the methods
  presented herein easily extend to categorical inference.  We
  explored this direction experimentally and show results in the
  supplement.}  Like the Gumbel-softmax method, these transformations
will be controlled by a temperature that concentrates the resulting
density near permutation matrices.  The first method is a novel
``stick-breaking'' construction; the second rounds points toward
permutations with the Hungarian algorithm.  We present these in turn
and then discuss their relative merits. We provide further
implementation details for both methods in the supplement.

\subsection{Stick-breaking transformations to the Birkhoff polytope}
\label{sub:stickbreaking}

Stick-breaking is well-known as a construction for the Dirichlet
process~\citep{sethuraman1994constructive}; here we show how the
same intuition can be extended to more complex discrete objects. 
Let~$B$ be a matrix in~${[0,1]^{(N-1) \times (N-1)}}$; we will
transform it into a doubly-stochastic
matrix~${X \in [0,1]^{N \times N}}$ by filling in entry by entry, starting
in the top left and raster scanning left to right then top to
bottom. Denote the~$(m,n)$-th entries of~$B$ and~$X$ by~$\beta_{mn}$
and~${x}_{mn}$, respectively.

Each row and column has an associated unit-length ``stick'' that we
allot to its entries.  The first entry in the matrix is given by
$x_{11} = \beta_{11}$.  As we work left to right in the first row, the
remaining stick length decreases as we add new entries. This reflects
the row normalization constraints.  The first row follows the standard
stick-breaking construction,
\begin{align*}
  x_{1n} &= \beta_{1n} \left(1 - \sum_{k=1}^{n-1} x_{1k} \right)  & &  \text{for } n=2, \ldots, N-1\\
  x_{1N} &= 1 - \sum_{n=1}^{N-1} x_{1n}.
\end{align*}
This is illustrated in Fig.~\ref{fig:transforms}b, where points in the
unit square map to points in the simplex. Here, the blue dots are
two-dimensional~$\distNormal(0, 4I)$ variates mapped through a coordinate-wise
logistic function.

Subsequent rows are more interesting, requiring a novel advance on the
typical uses of stick breaking. Here we need to conform to row and
column sums (which introduce upper bounds), and a lower bound
induced by stick remainders that must allow completion of subsequent
sum constraints.  Specifically, the remaining rows must now conform to
both row- and column-constraints. That is,
\begin{align*}
x_{mn} &\leq 1- \sum_{k=1}^{n-1} x_{mk} & & \text{(row sum)} \\
x_{mn} &\leq 1- \sum_{k=1}^{m-1} x_{kn} & & \text{(column sum)}.
\end{align*}
Moreover, there is also a lower bound on~$x_{mn}$. This entry must
claim enough of the stick such that what is leftover fits within
the confines imposed by subsequent column sums. That is, each column
sum places an upper bound on the amount that may be attributed to any
subsequent entry. If the remaining stick exceeds the sum of these
upper bounds, the matrix will not be doubly-stochastic.  Thus,
\begin{align*}
\underbrace{1 - \sum_{k=1}^n x_{mk}}_{\text{remaining stick}}
  &\leq \underbrace{\sum_{j=n+1}^N (1- \sum_{k=1}^{m-1} x_{kj})}_{
    \text{remaining upper bounds}}.
\end{align*}
Rearranging terms, we have,
\begin{align*}
  x_{mn} &\geq
1 - N + n - \sum_{k=1}^{n-1} x_{mk}  +  \sum_{k=1}^{m-1} \sum_{j=n+1}^N x_{kj}.
\end{align*}
Of course, this bound is only relevant if the right hand side is greater than zero.
Taken together, we have~$\ell_{mn} \leq x_{mn} \leq u_{mn}$, where,
\begin{align*}
\ell_{mn} &\triangleq \max \left \{0, \, 1 - N + n - \sum_{k=1}^{n-1} x_{mk}  +  \sum_{k=1}^{m-1} \sum_{j=n+1}^N x_{kj} \right \}
\\
u_{mn} &\triangleq 
\min \left \{1- \sum_{k=1}^{n-1} x_{mk}, \,
1- \sum_{k=1}^{m-1} x_{kn} \right\}.
\end{align*}
Accordingly, we define~${x_{mn} = \ell_{mn} + \beta_{mn} (u_{mn} - \ell_{mn})}$.
The inverse transformation from~$X$ to $B$ is analogous.
We start by computing~$z_{11}$ and then progressively compute
upper and lower bounds and set~${\beta_{mn} = (x_{mn} - \ell_{mn})/(u_{mn} - \ell_{mn})}$.

To complete the reparameterization, we define a parametric,
temperature-controlled density from a standard Gaussian matrix~${\Noise \in \reals^{(N-1) \times (N-1)}}$
to the unit-hypercube~$B$.
Let,
\begin{align*}
  \psi_{mn} &= \mu_{mn} + \nu_{mn} \noise_{mn}, \\
   \beta_{mn} &= \sigma\left( \psi_{mn} / \tau \right),
\end{align*}
where~${\theta = \{\mu_{mn}, \nu^2_{mn}\}_{m,n=1}^N}$ are the mean and
variance parameters of the intermediate Gaussian
matrix~$\Psi$,~${\sigma(u) = (1+e^{-u})^{-1}}$ is the logistic
function, and~$\tau$ is a temperature parameter. As~$\tau \to 0$, the
values of~$\beta_{mn}$ are pushed to either zero or one, depending on
whether the input to the logistic function is negative or positive,
respectively.  As a result, the doubly-stochastic output matrix~$X$ is
pushed toward the extreme points of the Birkhoff polytope, the
permutation matrices.  This map is illustrated in
Fig.~\ref{fig:transforms}c for permutations of~${N=3}$ elements.
Here, the blue dots are samples of~$B$ with~$\mu_{mn}=0$,
$\nu_{mn}=2$, and~$\tau=1$.

We compute gradients of this transformation with automatic
differentiation.  Since this transformation is ``feed-forward,'' its
Jacobian is lower triangular. The determinant of the Jacobian,
necessary for evaluating the density~$q_\tau(X; \theta)$, is a simple
function of the upper and lower bounds and is derived in
Appendix~\ref{sec:details}.  While this map is peculiar in its
reliance on an ordering of the elements, as discussed in
Section~\ref{sec:considerations}, it is a novel transformation to the
Birkhoff polytope that supports gradient-based variational
permutation inference.

\subsection{Rounding toward permutation matrices}
\label{sub:rounding}

While relaxing permutations to the Birkhoff polytope is intuitively
appealing, it is not strictly required.  For example, consider the
following procedure for sampling a point \emph{near} the Birkhoff
polytope:
\begin{enumerate}[label=(\roman*)]
\item Input~${\Noise \in \reals^{N \times N}}$,~${M \in \reals_+^{N \times N}}$, and~${V \in \reals_+^{N \times N}}$;
\item Map~$M \to \widetilde{M}$, a point in the Birkhoff polytope, using the Sinkhorn-Knopp algorithm;
\item Set~${\Psi = \widetilde{M} + V \odot \Noise}$ where~$\odot$ denotes elementwise multiplication;
\item Find~$\mathsf{round}(\Psi)$, the nearest permutation matrix to~$\Psi$, using the Hungarian algorithm;
\item Output~${X = \tau \Psi + (1-\tau) \mathsf{round}(\Psi)}$.
\end{enumerate}
This procedure defines a mapping~${X = g_\tau(\Noise; \theta)}$ with~${\theta = \{M, V\}}$. When the elements
of~$\Noise$ are independently sampled from a standard normal
distribution, it implicitly defines a distribution over matrices~$X$
parameterized by~${\theta}$. Furthermore, as~$\tau$ goes to
zero, the density concentrates on permutation matrices.  A simple
example is shown in Fig.~\ref{fig:transforms}d,
where~${M = \tfrac{1}{N}\bone \bone^\trans}$ with~$\bone$ a
vector of all ones,
${V = 0.4^2 \bone \bone^\trans}$, and~${\tau=0.5}$. We use this procedure to define a variational
distribution with density~$q_\tau(X; \theta)$.

To compute the ELBO and its gradient~\eqref{eq:elbo}, we need to
evaluate~$q_\tau(X; \theta)$.  By construction, steps (i) and (ii) involve
differentiable transformations of parameter~$M$ to set the mean close
to the Birkhoff polytope, but since these do not influence the
distribution of~$\Noise$, the non-invertibility of
the Sinkhorn-Knopp algorithm poses no problems.  Had
we applied this algorithm directly to~$\Noise$, this would not be true.
The challenge in computing the density stems from the rounding in
steps~(iv) and~(v).

To compute~$q_\tau(X; \theta)$, we need the
inverse~$g_\tau^{-1}(X; \theta)$ and its Jacobian.  The inverse is
straightforward: when~${\tau \in [0,1)}$, $\mathsf{round}(\Psi)$
outputs a point strictly closer to the nearest permutation, implying
${\mathsf{round}(\Psi) \equiv \mathsf{round}(X)}$.  Thus, the inverse
is~${g_\tau^{-1}(X; \theta) = \big(\tfrac{1}{\tau}X - \tfrac{1-\tau}{\tau}
  \mathsf{round}(X) - \widetilde{M}\big) \oslash V}$, where~$\oslash$ denotes elementwise division.  A slight wrinkle arises from the fact that
step (v) maps to a subset~${\mcX_\tau \subset \reals^{N \times N}}$ that
excludes the center of the Birkhoff polytope (note the ``hole'' in
Fig.~\ref{fig:transforms}d), but this inverse is valid for all~$X$ in
that subset.


The Jacobian is more challenging due to the non-differentiability
of~$\mathsf{round}$. However, since the nearest permutation output
only changes at points that are equidistant from two or more
permutation matrices, $\mathsf{round}$ is a piecewise constant
function with discontinuities only at a set of points with
zero measure. Thus, the change of variables theorem still applies.

With the inverse and its Jacobian, we have
\begin{align*}
  q_\tau(X; \theta) = 
  \prod_{m=1}^N\prod_{n=1}^N  \frac{1}{\tau \nu_{mn}}
  \distNormal \left( z_{mn}; \, 0, 1 \right)
  \times \bbI[X \in \mcX_\tau],
\end{align*}
where~${z_{mn} = [g_\tau^{-1}(X; \theta)]_{mn}}$ and~$\nu_{mn}$
are the entries of~$V$.
In the zero-temperature limit we recover a discrete
distribution on permutation matrices; otherwise the density
concentrates near the vertices as~${\tau \to 0}$.  This
transformation leverages computationally efficient algorithms
like Sinkhorn-Knopp and the Hungarian algorithm to define a
temperature-controlled variational distribution near the
Birkhoff polytope, and it enjoys many theoretical and practical
benefits.

\begin{figure*}[ht] 
   \centering
   \includegraphics[width=6.5in]{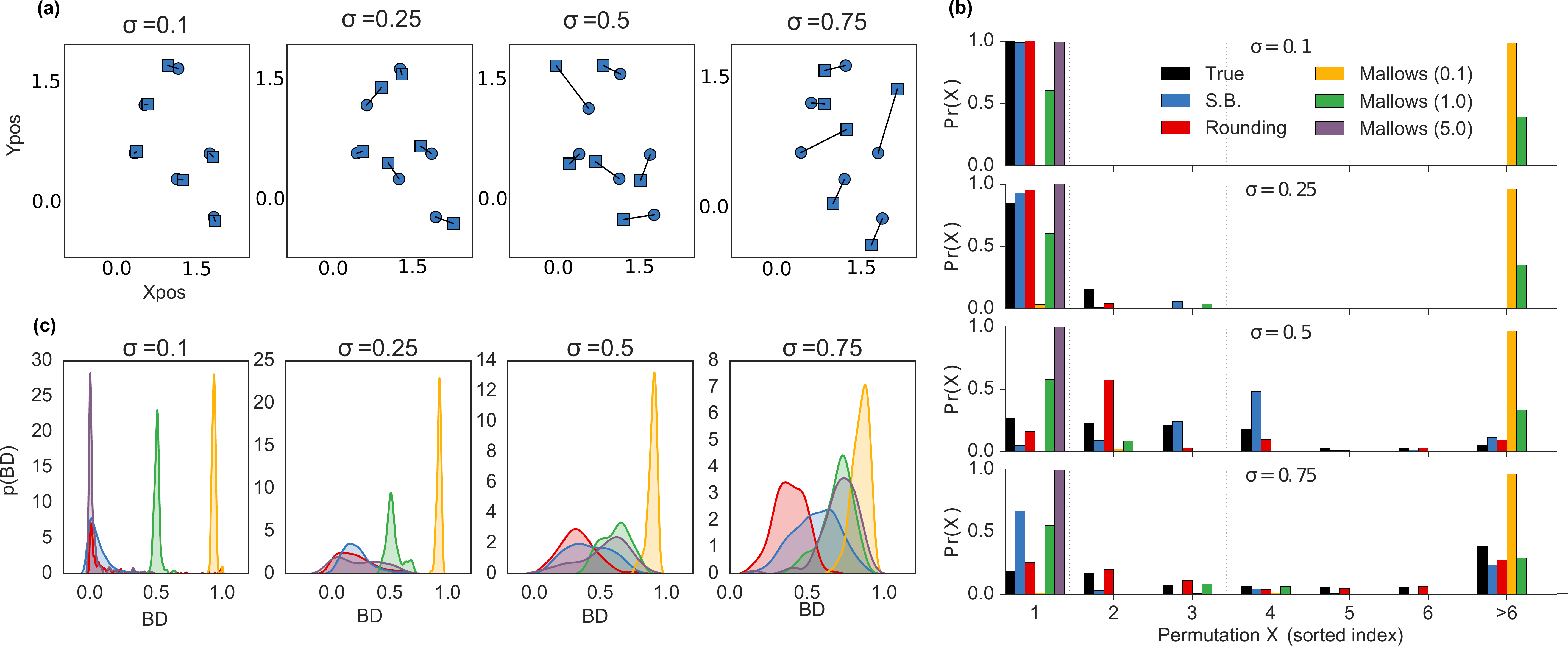}
   \caption{Synthetic matching experiment results. The goal is to
     infer the lines that match squares to circles. (a) Examples of
     center locations (circles) and noisy samples (squares), at
     different noise variances. (b) For illustration, we show the true
     and inferred probability mass functions for different method
     (rows) along with the Battacharya distance (BD) between them for
     a selected case of each $\sigma$ (columns). Permutations
     (indices) are sorted from the highest to lowest actual posterior
     probability. Only the 10 most likely configurations are shown,
     and the 11st bar represents the mass of all remaining
     configurations. (c) KDE plots of Battacharya distances for each
     parameter configuration (based on 200 experiment repetitions) for
     each method and parameter configuration.  For comparison,
     stick-breaking, rounding, and Mallows ($\theta = 1.0$) have BD's
     of .36, .35, and .66, respectively, in the~$\sigma = 0.5$ row of
     (b).}
   \label{fig:synthetic}
\end{figure*}

\subsection{Theoretical considerations}
\label{sec:considerations}

The stick-breaking and rounding transformations introduced above each
have their strengths and weaknesses.  Here we list some of their
conceptual differences.  While these considerations aid in
understanding the differences between the two transformations, the
ultimate test is in their empirical performance, which we study in
Section~\ref{sec:synthetic}.

\begin{itemize}
\item Stick-breaking relaxes to the Birkhoff polytope whereas rounding
  relaxes to~$\reals^{N \times N}$. The Birkhoff polytope is
  intuitively appealing, but as long as the
  likelihood,~$p(y \given X)$, accepts real-valued matrices, either
  may suffice.
  
\item Rounding uses the~$O(N^3)$ Hungarian algorithm in its sampling
  process, whereas stick-breaking has~$O(N^2)$ complexity. In practice,
  the stick-breaking computations are slightly more efficient.
    
\item Rounding can easily incorporate constraints.  If certain
  mappings are invalid, i.e.~${x_{mn} \equiv 0}$, they are given an
  infinite cost in the Hungarian algorithm. This is hard to do this
  with stick breaking as it would change the computation of the upper
  and lower bounds. (In both cases, constraints of the
  form~$x_{mn} \equiv 1$ simply reduce the dimension of the inference
  problem.)
  
\item Stick-breaking introduces a dependence on ordering.  While the
  mapping is bijective, a desired distribution on the Birkhoff polytope
  may require a complex distribution for~$B$.  Rounding, by contrast,
  is more ``symmetric'' in this regard.
  
\end{itemize}

In summary, stick-breaking offers an intuitive advantage---an exact
relaxation to the Birkhoff polytope---but it suffers from its
sensitivity to ordering and its inability to easily incorporate
constraints.  As we show next, these concerns ultimately lead us to
favor the rounding based methods in practice.

\section{Synthetic Experiments}
\label{sec:synthetic}
We are interested in two principal questions: 
 (i) how well can the stick-breaking and rounding re-parameterizations
of the Birkhoff polytope approximate the true posterior distribution
over permutations in tractable, low-dimensional cases? and (ii)
when do our proposed continuous relaxations offer
advantages over alternative  Bayesian permutation
inference algorithms?


To assess the quality of our approximations for distributions over
permutations, we considered a toy matching problem in which we are
given the locations of~$N$ cluster centers and a corresponding set
of~$N$ observations, one for each cluster, corrupted by Gaussian
noise.  Moreover, the observations are permuted so there is no
correspondence between the order of observations and the order of the
cluster centers.  The goal is to recover the posterior distribution
over permutations.  For~$N=6$, we can explicitly enumerate
the~$N!=720$ permutations and compute the posterior exactly.
 
As a baseline, we consider the Mallows distribution \cite{Mallows1957}
with density over a permutations $\phi$ given by
$p_{\theta, \phi_0}(\phi)\propto \exp(-\theta d(\phi,\phi_0))$, where
$\phi_0$ is a central permutation,
${d(\phi,\phi_0)=\sum_{i=1}^N |\phi(i)-\phi_0(i)|}$ is a distance
between permutations, and $\theta$ controls the spread around
$\phi_0$. This is the most popular exponential family model for
permutations, but since it is necessarily unimodal, it can fail to
capture complex permutation distributions.

 \begin{table}[h]
  \caption{Mean BDs in the synthetic matching experiment for various methods and observation variances.}
  \label{table:BDs}
  \centering
  \begin{tabular}{lllll}
    & \multicolumn{4}{c}{Variance $\sigma^2$} \\
    \cmidrule(lr){3-4} 
    \textbf{Method} & $.1^2$ & $.25^2$ & $.5^2$ & $.75^2$ \\
    \hline
    Stick-breaking & .09 & .23 & .41 & .55 \\
    Rounding & \textbf{.06} & \textbf{.21}  & \textbf{.32}  & \textbf{.38} \\
    Mallows $(\theta=0.1)$ & .93 & .92 & .89  & .85 \\
    Mallows $(\theta=0.5)$ & .51 & .53  & .61 & .71 \\
    Mallows $(\theta=2)$ & .23 & .33 & .53  & .69 \\
    Mallows $(\theta=5)$ & .08 & .27 & .54 & .72 \\
    Mallows $(\theta=10)$ & .08 & .27 & .54  & .72 \\
    \bottomrule
  \end{tabular}
\end{table}

\begin{figure*}[ht]
  \centering
  \includegraphics[width=6in]{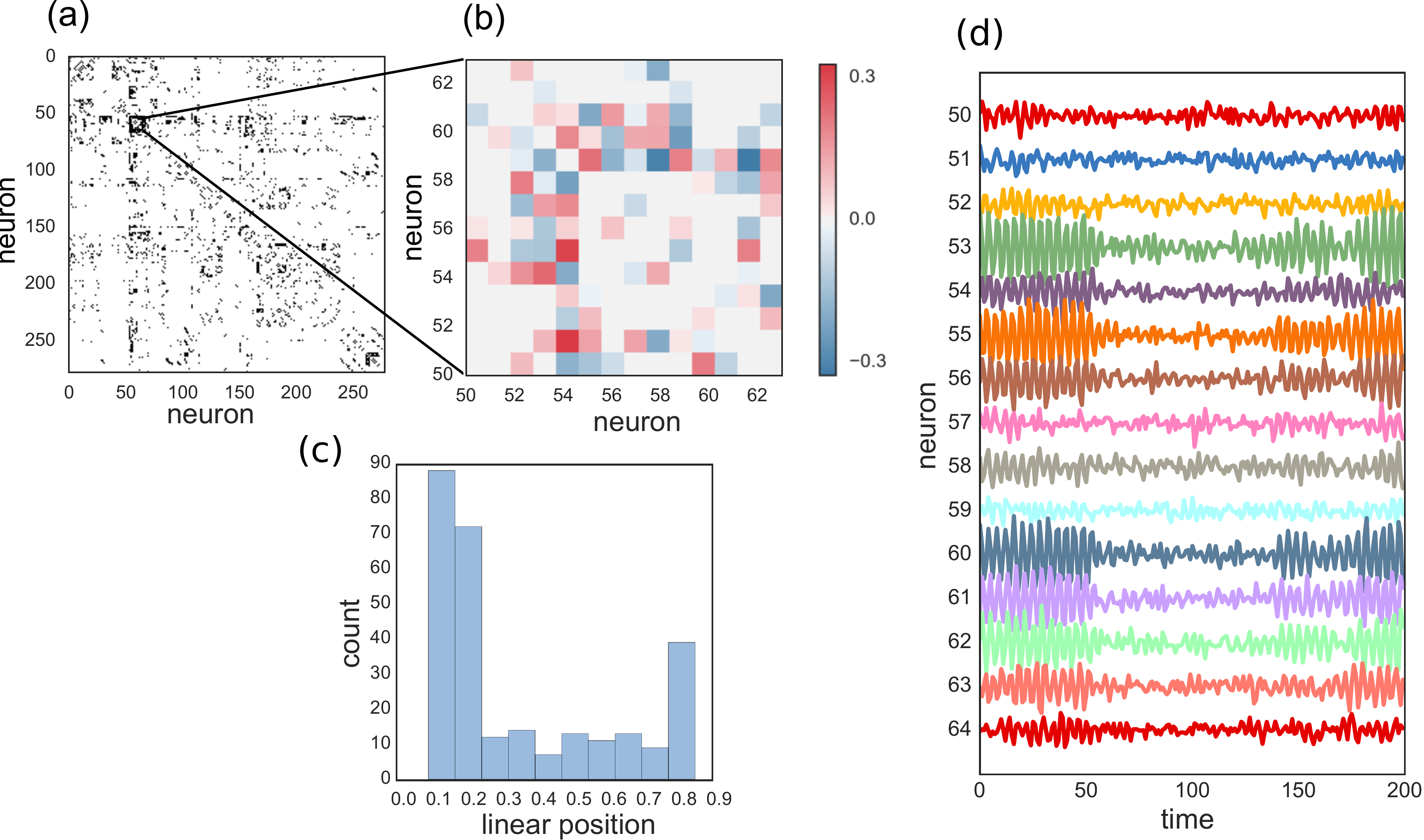} 
  \caption{Problem setup. (a) Hermaphrodite C.elegans reference
    connectome (from \cite{varshney2011structural,wormatlas})
    consisting of 278 somatic neurons, merging two distinct types of
    synapses: chemical and electrical (gap junctions). (b) Example of
    matrix $W$ consistent with the connectome information (only 14
    neurons for visibility), (c) Distribution of neuron position in
    the body, zero means head and one means tail. From
    \cite{white1986structure,wormatlas} (d). Sampled linear dynamical system with matrix $W$.}
  \vspace{-1em}
  \label{fig:connectome}
\end{figure*}

We measured the discrepancy between true posterior and an empirical
estimate of the inferred posteriors using using the Battacharya
distance (BD). We fit $q_\tau(X; \theta)$ with an annealing schedule
for both stick-breaking and rounding transformations, sampled the
variational posterior, and rounded the samples to the nearest
permutation matrix with the Hungarian algorithm. For the Mallows
distribution, we set $\phi_0$ to the MAP estimate, also found with the
Hungarian algorithm, and sampled using MCMC.
 
We found our method outperforms the simple Mallows distribution and
reasonably approximates non-trivial distributions over
permutations. Fig~\ref{fig:synthetic} illustrates our findings,
showing (a) sample experiment configurations; (b) examples of
inferred, discrete, posteriors for stick breaking, rounding, and
Mallows at various levels of noise; and (c) histogram of Battacharya
distance.  The latter are summarized in Table~\ref{table:BDs}.

\section{Inferring neuron identities in \textit{C. elegans}}
\label{sec:celegans}

\begin{figure*}[ht]
  \centering
  \includegraphics[width=6in]{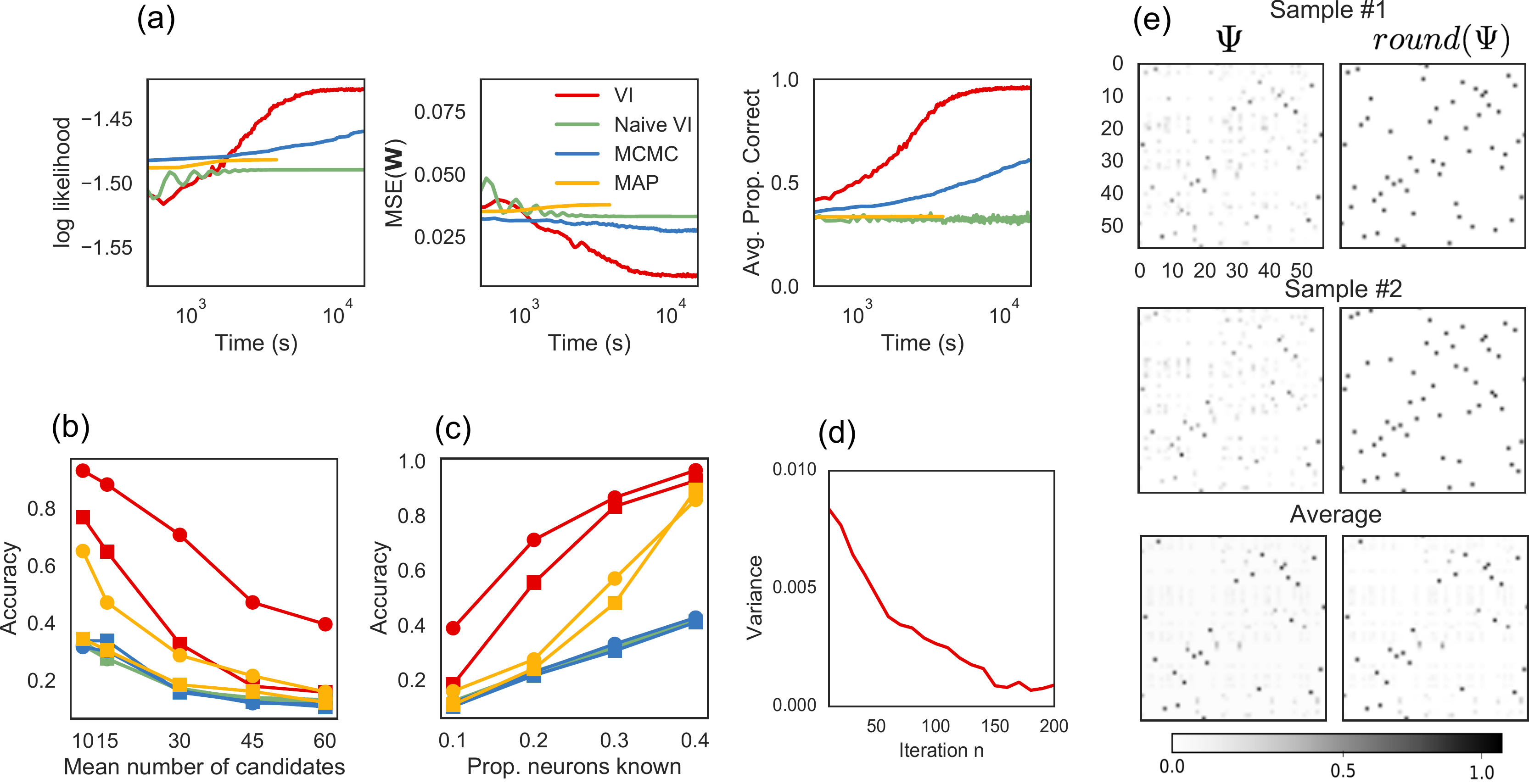} 
  \caption{Results on the C.elegans inference example. (a) An example of convergence of the algorithm, and the baselines. (b) Accuracy of identity inference as a function of mean number of candidates (correlated with $\nu$), for $M=1$ worm (square) and combining information of $M=5$ worms (circles). (c) Accuracy as a function of the proportion of known networks beforehand,  with $\nu=0.1$ (circles) and $\nu=0.05$ (squares). (d)Variance of distribution over permutations (vectorized) as a function of the number of iterations. (e) Two samples of permutation matrices $\mathsf{round}(\Psi)$ (right) and their noisy, non-rounded versions $\Psi$ (left) at the twentieth algorithm iteration. The average of many samples is also shown. Presence of grey dots indicate that the sampling procedure is not deterministic.}
\label{fig:elegantresults}
\end{figure*}

Finally, we consider an application motivated by the study of the
neural dynamics in \textit{C. elegans}. This worm is a model organism
in neuroscience as its neural network is stereotyped from animal to
animal and its complete neural wiring diagram is
known~\citep{varshney2011structural}.  We represent this network, or
connectome, as a binary adjacency
matrix~${A \in \{0,1\}^{N \times N}}$, shown in
Fig.~\ref{fig:connectome}a. The hermaphrodite has~${N=278}$ somatic
neurons, and (undirected) synaptic connections between neurons~$m$
and~$n$ are denoted by~$A_{mn}=1$.

Modern recording technology enables simultaneous measurements of
hundreds of these neurons simultaneously \citep{Kato2015,
  nguyen2016whole}.  However, matching the observed neurons to nodes
in the reference connectome is still a manual task.  Experimenters
consider the location of the neuron along with its pattern of activity
to perform this matching, but the process is laborious and the results
prone to error. We prototype an alternative solution, leveraging the
location of neurons and their activity in a probabilistic model. We
resolve neural identity by integrating different sources of
information from the connectome, some covariates (e.g. position) and
neural dynamics. Moreover, we combine information from many
individuals to facilitate identity resolution.  The hierarchical
nature of this problem and the plethora of prior constraints and
observations motivates our Bayesian approach.

\parhead{Probabilistic Model.}  Let~$J$ denote the number of worms
and~${Y^{(j)} \in \reals^{T_j \times N}}$ denote a recording of
worm~$j$ with~$T_j$ time steps and~$N$ neurons.  We model the neural
activity with a linear dynamical system
${Y^{(j)}_t = {X^{(j)}} W {X^{(j)}}^\trans
  Y^{(j)}_{t-1}+\varepsilon^{(j)}_t}$, where~$\varepsilon_t^{(j)}$ is Gaussian noise.
Here, ~$X^{(j)}$ is a latent permutation of neurons that must be
inferred for each worm in order to align the observations with the
shared dynamics matrix~$W$.  The hierarchical component of the model
is that~$W$ is shared by all worms, and it encodes the influence of
one neuron on another (the rows and columns of~$W$ are ordered in the
same way as the known connectome~$A$). The connectome specifies which
entries of~$W$ may be non-zero: without a connection (${A_{mn}=0}$)
the corresponding weight must be zero; if a connection exists (${A_{mn}=1}$),
we must infer its weight.  Fig.~\ref{fig:connectome}d shows
simulated traces from a network that respects the connectivity of~$A$
and has random Gaussian weights.  The linear model is a simple start;
in future work we can incorporate nonlinear dynamics, more informed
priors on~$W$, etc.

Our goal is to infer~$W$ and~$\{X^{(j)}\}$ given~$\{Y^{(j)}\}$ using
variational permutation inference.  We place a standard Gaussian prior
on~$W$ and a uniform prior on~$X^{(j)}$, and we use the rounding
transformation to approximate the posterior,
$p(W,\{X^{(j)}\} \given \{Y^{(j)}\})\propto p(W) \prod_m p(Y^{(j)} \given W, {X^{(j)}}) \,  p({X^{(j)}}$).

Finally, we use neural position along the worm's body to constrain the
possible neural identities for a given neuron.
We use the known positions of each neuron~\citep{wormatlas}, approximating
the worm as a one-dimensional object with neurons locations distributed
as in Fig.~\ref{fig:connectome}c. Then, given reported positions of the
neurons, we can conceive a binary \textit{constraint} matrix
$C^{(j)}$ so that $C^{(j)}_{mn}=1$ if (observed) neuron $m$ is close enough
to (canonical) neuron $n$; i.e., if their distance is smaller than a
tolerance $\nu$. We enforce this constraint during inference by
zeroing corresponding entries in the parameter matrix $M$ described in
~\ref{sub:rounding}.  This modeling choice greatly reduces the number
parameters of the model, and facilitates inference. 

\parhead{Results.} We compared against three methods: (i) naive
variational inference, where we do not enforce the constraint that
$X^{(j)}$ be a permutation and instead treat each row of~$X^{(j)}$ as
a Dirichlet distributed vector; (ii) MCMC, where we alternate between
sampling from the conditionals of $W$ (Gaussian) and ${X^{(j)}}$, from
which one can sample by proposing local swaps, as described in
\cite{Diaconis2009}, and (iii) maximum a posteriori estimation (MAP).
Our MAP algorithm alternates between the optimizing estimate of $W$ given~$\{X^{(m)}, Y^{(m)}\}$ using linear regression and finding the optimal ${X^{(j)}}$. The second step requires solving a quadratic assignment
problem (QAP) in ${X^{(j)}}$; that is, it can be expressed as
$\mathrm{Tr}(AXBX^\trans)$ for matrices $A,B$. We used the QAP solver
proposed by~\citet{Vogelstein2015}.

We find that our method outperforms each
baseline. Fig.~\ref{fig:elegantresults}a illustrates convergence to a
better solution for a certain parameter configuration. Moreover,
Fig.~\ref{fig:elegantresults}b and Fig.~\ref{fig:elegantresults}c show
that our method outperforms alternatives when there are many possible
candidates and when only a small proportion of neurons are known with
certitude. Fig.~\ref{fig:elegantresults}c also shows that these
Bayesian methods benefit from combining information across many
worms.

Altogether, these results indicate our method enables a more efficient
use of information than its alternatives. This is consistent with
other results showing faster convergence of variational inference over
MCMC \citep{Blei2017}, especially with simple Metropolis-Hastings
proposals. We conjecture that MCMC would eventually obtain similar if
not better results, but the local proposals---swapping pairs of
labels---leads to slow convergence. On the other hand,
Fig~\ref{fig:elegantresults}a shows that our method converges much
more quickly while still capturing a distribution over permutations,
as shown by the overall variance of the samples in Fig~\ref{fig:elegantresults}d and the individual samples in Fig~\ref{fig:elegantresults}e.

\section{Conclusion}

Our results provide evidence that variational permutation
inference is a valuable tool, especially in complex
problems like neural identity inference where information must be
aggregated from disparate sources in a hierarchical model.  As we
apply this to real neural recordings, we must consider more realistic,
nonlinear models of neural dynamics. Here, again, we expect
variational methods to shine, leveraging automatic gradients of the
relaxed ELBO to efficiently explore the space of variational posterior
distributions.

\subsection*{Acknowledgments}
We thank Christian Naesseth for many helpful discussions.  SWL is
supported by the Simons Collaboration on the Global Brain (SCGB)
Postdoctoral Fellowship (418011).
HC is supported by Graphen, Inc.
LP is supported by ARO MURI
W91NF-12-1-0594, the SCGB, DARPA SIMPLEX N66001-15-C-4032, IARPA
MICRONS D16PC00003, and ONR N00014-16-1-2176.  JPC is supported by the
Sloan Foundation, McKnight Foundation, and the SCGB.

\bibliography{refs}

\begin{thebibliography}{46}
\providecommand{\natexlab}[1]{#1}
\providecommand{\url}[1]{\texttt{#1}}
\expandafter\ifx\csname urlstyle\endcsname\relax
  \providecommand{\doi}[1]{doi: #1}\else
  \providecommand{\doi}{doi: \begingroup \urlstyle{rm}\Url}\fi

\bibitem[Abadi et~al.(2016)Abadi, Agarwal, Barham, Brevdo, Chen, Citro,
  Corrado, Davis, Dean, Devin, et~al.]{Abadi2016}
M.~Abadi, A.~Agarwal, P.~Barham, E.~Brevdo, Z.~Chen, C.~Citro, G.~S. Corrado,
  A.~Davis, J.~Dean, M.~Devin, et~al.
\newblock Tensorflow: Large-scale machine learning on heterogeneous distributed
  systems.
\newblock \emph{arXiv preprint arXiv:1603.04467}, 2016.

\bibitem[Adams and Zemel(2011)]{adams2011ranking}
R.~P. Adams and R.~S. Zemel.
\newblock Ranking via {S}inkhorn propagation.
\newblock \emph{arXiv preprint arXiv:1106.1925}, 2011.

\bibitem[Balog et~al.(2017)Balog, Tripuraneni, Ghahramani, and
  Weller]{balog2017lost}
M.~Balog, N.~Tripuraneni, Z.~Ghahramani, and A.~Weller.
\newblock Lost relatives of the {G}umbel trick.
\newblock In \emph{Proceedings of the International Conference on Machine
  Learning}, 2017.

\bibitem[Blei et~al.(2017)Blei, Kucukelbir, and McAuliffe]{Blei2017}
D.~M. Blei, A.~Kucukelbir, and J.~D. McAuliffe.
\newblock Variational inference: A review for statisticians.
\newblock \emph{Journal of the American Statistical Association}, 2017.

\bibitem[Bloem-Reddy and Orbanz(2016)]{bloem2016random}
B.~Bloem-Reddy and P.~Orbanz.
\newblock Random walk models of network formation and sequential {M}onte
  {C}arlo methods for graphs.
\newblock \emph{arXiv preprint arXiv:1612.06404}, 2016.

\bibitem[Diaconis(1988)]{diaconis1988group}
P.~Diaconis.
\newblock Group representations in probability and statistics.
\newblock In S.~S. Gupta, editor, \emph{Institute of Mathematical Statistics
  Lecture Notes---Monograph Series}, volume~11. 1988.

\bibitem[Diaconis(2009)]{Diaconis2009}
P.~Diaconis.
\newblock The {M}arkov chain {M}onte {C}arlo revolution.
\newblock \emph{Bulletin of the American Mathematical Society}, 46\penalty0
  (2):\penalty0 179--205, 2009.

\bibitem[Glynn(1990)]{Glynn1990}
P.~W. Glynn.
\newblock Likelihood ratio gradient estimation for stochastic systems.
\newblock \emph{Communications of the ACM}, 33\penalty0 (10):\penalty0 75--84,
  oct 1990.

\bibitem[Guibas(2008)]{guibas2008identity}
L.~J. Guibas.
\newblock The identity management problem: a short survey.
\newblock In \emph{11th International Conference on Information Fusion}, pages
  1--7. IEEE, 2008.

\bibitem[Gumbel(1954)]{gumbel1954statistical}
E.~J. Gumbel.
\newblock \emph{Statistical theory of extreme values and some practical
  applications: a series of lectures}.
\newblock Number~33. US Govt. Print. Office, 1954.

\bibitem[Harrison and Miller(2013)]{harrison2013importance}
M.~T. Harrison and J.~W. Miller.
\newblock Importance sampling for weighted binary random matrices with
  specified margins.
\newblock \emph{arXiv preprint arXiv:1301.3928}, 2013.

\bibitem[Hoffman et~al.(2013)Hoffman, Blei, Wang, and
  Paisley]{hoffman2013stochastic}
M.~D. Hoffman, D.~M. Blei, C.~Wang, and J.~Paisley.
\newblock Stochastic variational inference.
\newblock \emph{The Journal of Machine Learning Research}, 14\penalty0
  (1):\penalty0 1303--1347, 2013.

\bibitem[Huang et~al.(2009)Huang, Guestrin, and Guibas]{huang2009fourier}
J.~Huang, C.~Guestrin, and L.~Guibas.
\newblock {F}ourier theoretic probabilistic inference over permutations.
\newblock \emph{Journal of Machine Learning Research}, 10\penalty0
  (May):\penalty0 997--1070, 2009.

\bibitem[Jang et~al.(2017)Jang, Gu, and Poole]{jang2016categorical}
E.~Jang, S.~Gu, and B.~Poole.
\newblock Categorical reparameterization with {G}umbel-softmax.
\newblock In \emph{Proceedings of the International Conference on Learning
  Representations}, 2017.

\bibitem[Kato et~al.(2015)Kato, Kaplan, Schrödel, Skora, Lindsay, Yemini,
  Lockery, and Zimmer]{Kato2015}
S.~Kato, H.~Kaplan, T.~Schrödel, S.~Skora, T.~Lindsay, E.~Yemini, S.~Lockery,
  and M.~Zimmer.
\newblock Global brain dynamics embed the motor command sequence of
  {C}aenorhabditis elegans.
\newblock \emph{Cell}, 163\penalty0 (3):\penalty0 656 -- 669, 2015.
\newblock ISSN 0092-8674.

\bibitem[Kingma and Ba(2014)]{kingma2014adam}
D.~Kingma and J.~Ba.
\newblock Adam: A method for stochastic optimization.
\newblock \emph{arXiv preprint arXiv:1412.6980}, 2014.

\bibitem[Kingma and Welling(2014)]{Kingma2014}
D.~P. Kingma and M.~Welling.
\newblock Auto-encoding variational {B}ayes.
\newblock In \emph{International Conference on Learning Representations}, 2014.

\bibitem[Kondor et~al.(2007)Kondor, Howard, and Jebara]{kondor2007multi}
R.~Kondor, A.~Howard, and T.~Jebara.
\newblock Multi-object tracking with representations of the symmetric group.
\newblock In \emph{Artificial Intelligence and Statistics}, number~1, page~5,
  2007.

\bibitem[Kuhn(1955)]{kuhn1955hungarian}
H.~W. Kuhn.
\newblock The {H}ungarian method for the assignment problem.
\newblock \emph{Naval research logistics quarterly}, 2\penalty0 (1-2):\penalty0
  83--97, 1955.

\bibitem[Kumaraswamy(1980)]{kumaraswamy1980generalized}
P.~Kumaraswamy.
\newblock A generalized probability density function for double-bounded random
  processes.
\newblock \emph{Journal of Hydrology}, 46\penalty0 (1-2):\penalty0 79--88,
  1980.

\bibitem[Kusner and Hern{\'a}ndez-Lobato(2016)]{kusner2016gans}
M.~J. Kusner and J.~M. Hern{\'a}ndez-Lobato.
\newblock {GAN}s for sequences of discrete elements with the {G}umbel-softmax
  distribution.
\newblock \emph{arXiv preprint arXiv:1611.04051}, 2016.

\bibitem[Lawler(1963)]{lawler1963quadratic}
E.~L. Lawler.
\newblock The quadratic assignment problem.
\newblock \emph{Management science}, 9\penalty0 (4):\penalty0 586--599, 1963.

\bibitem[Lebanon and Mao(2008)]{lebanon2008non}
G.~Lebanon and Y.~Mao.
\newblock Non-parametric modeling of partially ranked data.
\newblock \emph{Journal of Machine Learning Research}, 9\penalty0
  (Oct):\penalty0 2401--2429, 2008.

\bibitem[Li et~al.(2013)Li, Swersky, and Zemel]{li2013efficient}
K.~Li, K.~Swersky, and R.~Zemel.
\newblock Efficient feature learning using {P}erturb-and-{MAP}.
\newblock \emph{Neural Information Processing Systems Workshop on
  Perturbations, Optimization, and Statistics}, 2013.

\bibitem[Lim and Wright(2014)]{lim2014beyond}
C.~H. Lim and S.~Wright.
\newblock Beyond the {B}irkhoff polytope: {C}onvex relaxations for vector
  permutation problems.
\newblock In \emph{Advances in Neural Information Processing Systems}, pages
  2168--2176, 2014.

\bibitem[Linderman et~al.(2015)Linderman, Johnson, and
  Adams]{linderman2015dependent}
S.~Linderman, M.~Johnson, and R.~P. Adams.
\newblock Dependent multinomial models made easy: Stick-breaking with the
  {P}olya-gamma augmentation.
\newblock In \emph{Advances in Neural Information Processing Systems}, pages
  3456--3464, 2015.

\bibitem[Lints et~al.(2005)Lints, Altun, Weng, Stephney, Stephney, Volaski, and
  Hall]{wormatlas}
R.~Lints, Z.~F. Altun, H.~Weng, T.~Stephney, G.~Stephney, M.~Volaski, and D.~H.
  Hall.
\newblock {WormAtlas Update}.
\newblock 2005.

\bibitem[Maclaurin et~al.(2015)Maclaurin, Duvenaud, and
  Adams]{maclaurin2015autograd}
D.~Maclaurin, D.~Duvenaud, and R.~P. Adams.
\newblock Autograd: Effortless gradients in numpy.
\newblock In \emph{ICML 2015 AutoML Workshop}, 2015.

\bibitem[Maddison et~al.(2017)Maddison, Mnih, and Teh]{maddison2016concrete}
C.~J. Maddison, A.~Mnih, and Y.~W. Teh.
\newblock The {C}oncrete distribution: A continuous relaxation of discrete
  random variables.
\newblock 2017.

\bibitem[Mallows(1957)]{Mallows1957}
C.~L. Mallows.
\newblock Non-null ranking models. i.
\newblock \emph{Biometrika}, 44\penalty0 (1/2):\penalty0 114--130, 1957.

\bibitem[Meil{\u{a}} et~al.(2007)Meil{\u{a}}, Phadnis, Patterson, and
  Bilmes]{meilua2007consensus}
M.~Meil{\u{a}}, K.~Phadnis, A.~Patterson, and J.~Bilmes.
\newblock Consensus ranking under the exponential model.
\newblock In \emph{Proceedings of the 23rd Annual Conference on Uncertainty in
  Artificial Intelligence (UAI)}, 2007.

\bibitem[Miller et~al.(2013)Miller, Harrison, et~al.]{miller2013exact}
J.~W. Miller, M.~T. Harrison, et~al.
\newblock Exact sampling and counting for fixed-margin matrices.
\newblock \emph{The Annals of Statistics}, 41\penalty0 (3):\penalty0
  1569--1592, 2013.

\bibitem[Munkres(1957)]{munkres1957algorithms}
J.~Munkres.
\newblock Algorithms for the assignment and transportation problems.
\newblock \emph{Journal of the Society for Industrial and Applied Mathematics},
  5\penalty0 (1):\penalty0 32--38, 1957.

\bibitem[Naesseth et~al.(2017)Naesseth, Ruiz, Linderman, and
  Blei]{naesseth2017reparameterization}
C.~Naesseth, F.~Ruiz, S.~Linderman, and D.~Blei.
\newblock Reparameterization gradients through acceptance-rejection sampling
  algorithms.
\newblock In \emph{Artificial Intelligence and Statistics}, pages 489--498,
  2017.

\bibitem[Nguyen et~al.(2016)Nguyen, Shipley, Linder, Plummer, Liu, Setru,
  Shaevitz, and Leifer]{nguyen2016whole}
J.~P. Nguyen, F.~B. Shipley, A.~N. Linder, G.~S. Plummer, M.~Liu, S.~U. Setru,
  J.~W. Shaevitz, and A.~M. Leifer.
\newblock Whole-brain calcium imaging with cellular resolution in freely
  behaving {C}aenorhabditis elegans.
\newblock \emph{Proceedings of the National Academy of Sciences}, 113\penalty0
  (8):\penalty0 E1074--E1081, 2016.

\bibitem[Plis et~al.(2011)Plis, McCracken, Lane, and
  Calhoun]{plis2011directional}
S.~M. Plis, S.~McCracken, T.~Lane, and V.~D. Calhoun.
\newblock Directional statistics on permutations.
\newblock In \emph{Artificial Intelligence and Statistics}, pages 600--608,
  2011.

\bibitem[Rao et~al.(2016)Rao, Adams, and Dunson]{rao2016bayesian}
V.~Rao, R.~P. Adams, and D.~D. Dunson.
\newblock Bayesian inference for {M}at{\'e}rn repulsive processes.
\newblock \emph{Journal of the Royal Statistical Society: Series B (Statistical
  Methodology)}, 2016.

\bibitem[Rezende et~al.(2014)Rezende, Mohamed, and
  Wierstra]{rezende2014stochastic}
D.~J. Rezende, S.~Mohamed, and D.~Wierstra.
\newblock Stochastic backpropagation and approximate inference in deep
  generative models.
\newblock In \emph{Proceedings of The 31st International Conference on Machine
  Learning}, pages 1278--1286, 2014.

\bibitem[Salimans and Knowles(2013)]{Salimans2013}
T.~Salimans and D.~A. Knowles.
\newblock Fixed-form variational posterior approximation through stochastic
  linear regression.
\newblock \emph{Bayesian Analysis}, 8\penalty0 (4):\penalty0 837--882, 2013.

\bibitem[Sethuraman(1994)]{sethuraman1994constructive}
J.~Sethuraman.
\newblock A constructive definition of {D}irichlet priors.
\newblock \emph{Statistica sinica}, pages 639--650, 1994.

\bibitem[Shin et~al.(2005)Shin, Lee, Thrun, and Guibas]{shin2005lazy}
J.~Shin, N.~Lee, S.~Thrun, and L.~Guibas.
\newblock Lazy inference on object identities in wireless sensor networks.
\newblock In \emph{Proceedings of the 4th International Symposium on
  Information Processing in Sensor Networks}, page~23. IEEE Press, 2005.

\bibitem[Sinkhorn and Knopp(1967)]{sinkhorn1967concerning}
R.~Sinkhorn and P.~Knopp.
\newblock Concerning nonnegative matrices and doubly stochastic matrices.
\newblock \emph{Pacific Journal of Mathematics}, 21\penalty0 (2):\penalty0
  343--348, 1967.

\bibitem[Varshney et~al.(2011)Varshney, Chen, Paniagua, Hall, and
  Chklovskii]{varshney2011structural}
L.~R. Varshney, B.~L. Chen, E.~Paniagua, D.~H. Hall, and D.~B. Chklovskii.
\newblock Structural properties of the {C}aenorhabditis elegans neuronal
  network.
\newblock \emph{PLoS {C}omputational {B}iology}, 7\penalty0 (2):\penalty0
  e1001066, 2011.

\bibitem[Vogelstein et~al.(2015)Vogelstein, Conroy, Lyzinski, Podrazik,
  Kratzer, Harley, Fishkind, Vogelstein, and Priebe]{Vogelstein2015}
J.~T. Vogelstein, J.~M. Conroy, V.~Lyzinski, L.~J. Podrazik, S.~G. Kratzer,
  E.~T. Harley, D.~E. Fishkind, R.~J. Vogelstein, and C.~E. Priebe.
\newblock Fast approximate quadratic programming for graph matching.
\newblock \emph{PLOS one}, 10\penalty0 (4):\penalty0 e0121002, 2015.

\bibitem[White et~al.(1986)White, Southgate, Thomson, and
  Brenner]{white1986structure}
J.~G. White, E.~Southgate, J.~N. Thomson, and S.~Brenner.
\newblock The structure of the nervous system of the nematode {C}aenorhabditis
  elegans: the mind of a worm.
\newblock \emph{Phil. Trans. R. Soc. Lond}, 314:\penalty0 1--340, 1986.

\bibitem[Williams(1992)]{Williams1992}
R.~J. Williams.
\newblock Simple statistical gradient-following algorithms for connectionist
  reinforcement learning.
\newblock \emph{Machine Learning}, 8\penalty0 (3--4):\penalty0 229--256, 1992.

\end{thebibliography}
\bibliographystyle{abbrvnat}

\clearpage
\appendix

\section{Alternative methods of discrete variational inference}

We can gain insight and intuition about the stick-breaking and
rounding transformations by considering their counterparts for
discrete, or categorical, variational inference.  Continuous
relaxations are an appealing approach for this problem, affording
gradient-based inference with the reparameterization trick.
First we review the Gumbel-softmax method~\citep{maddison2016concrete,
  jang2016categorical, kusner2016gans}---a recently proposed
method for discrete variational inference with the reparameterization
trick---then we discuss analogs of our permutation and rounding
transformations for the categorical case.  These can be considered
alternatives to the Gumbel-softmax method, which we compare
empirically in Appendix~\ref{sec:vae}.

Recently there have been a number of proposals for extending the
reparameterization trick~\citep{rezende2014stochastic, Kingma2014} to
high dimensional discrete problems\footnote{Discrete inference is only
  problematic in the high dimensional case, since in low dimensional
  problems we can enumerate the possible values of~$x$ and compute the
  normalizing constant~$p(y) = \sum_x p(y, x)$.} by relaxing them to
analogous continuous problems \citep{maddison2016concrete,
  jang2016categorical, kusner2016gans}.  These approaches are based on
the following observation: if~$x \in \{0,1\}^N$ is a one-hot vector
drawn from a categorical distribution, then the support of~$p(x)$ is
the set of vertices of the~$N-1$ dimensional simplex.  We can
represent the distribution of~$x$ as an atomic density on the simplex.

\subsection{The Gumbel-softmax method}
Viewing~$x$ as a vertex of the simplex motivates a natural relaxation:
rather than restricting ourselves to atomic measures,
consider continuous densities on the simplex. To be concrete, suppose
the density of~$x$ is defined by the transformation,
\begin{align*}
  \noise_n &\iid{\sim} \mathrm{Gumbel}(0, 1) \\
  \psi_n & = \log \theta_n + \noise_n  \\
  x &=  \mathsf{softmax}(\psi / \tau) \\
        &=\left(\frac{e^{\psi_1 / \tau}}{\sum_{n=1}^N e^{\psi_n / \tau}},
      \,\ldots,\,
      \frac{e^{\psi_N / \tau}}{\sum_{n=1}^N e^{\psi_n / \tau}} \right).
\end{align*}
The output~$x$ is now a point on the simplex, and the
parameter~${\theta = (\theta_1, \ldots, \theta_N) \in \reals^N_+}$ can be optimized
via stochastic gradient ascent with the reparameterization trick.

The Gumbel distribution leads to a nicely interpretable model: adding
i.i.d. Gumbel noise to~${\log \theta}$ and taking the argmax yields an
exact sample from the normalized probability mass
function~$\bar{\theta}$,
where~${\bar{\theta}_n = \theta_n / \sum_{m=1}^N
  \theta_m}$~\citep{gumbel1954statistical}. The softmax is a natural
relaxation. As the temperature~$\tau$ goes to zero, the softmax
converges to the argmax function. Ultimately, however, this is just a
continuous relaxation of an atomic density to a continuous density.

Stick-breaking and rounding offer two alternative ways of constructing a
relaxed version of a discrete random variable, and both are amenable
to reparameterization. However, unlike the Gumbel-Softmax, these
relaxations enable extensions to more complex combinatorial objects,
notably, permutations.

\subsection{Stick-breaking}

The stick-breaking transformation to the Birkhoff polytope presented
in the main text contains a recipe for stick-breaking on the simplex.
In particular, as we filled in the first row of the doubly-stochastic
matrix, we were transforming a real-valued vector~$\psi \in \reals^{N-1}$
to a point in the simplex.  We present this procedure for
discrete variational inference again here in simplified form.
Start with a reparameterization of a Gaussian vector,
\begin{align*}
  \noise_n &\iid{\sim} \distNormal(0, 1), \\
  \psi_n & = \mu_n + \nu_n \noise_n, \qquad 1 \leq n \leq N-1,
\end{align*}
parameterized by~${\theta = (\mu_n, \nu_n)_{n=1}^{N-1}}$. 
Then map this to the unit hypercube in a temperature-controlled manner
with the logistic function,
\begin{align*}
  \beta_n &= \sigma(\psi_n / \tau),
\end{align*}
where~${\sigma(u) = (1+e^{-u})^{-1}}$ is the logistic function.
Finally, transform the unit hypercube to a point in the simplex:
\begin{align*}
  x_1 &= \beta_1, \\
  x_n &= \beta_n \left(1- \sum_{m=1}^{n-1} x_m\right), \qquad 2 \leq n \leq N-1,  \\
  x_N &= 1- \sum_{m=1}^{N-1} x_m,
\end{align*}
Here,~$\beta_n$ is the fraction of the remaining ``stick'' of probability
mass assigned to~$x_n$.  This transformation is invertible, the
Jacobian is lower-triangular, and the determinant of the Jacobian is
easy to compute.  \citet{linderman2015dependent} compute the density
of~$x$ implied by a Gaussian density on~$\psi$.

The temperature~$\tau$ controls how concentrated~$p(x)$ is at the
vertices of the simplex, and with appropriate choices of parameters,
in the limit~${\tau \to 0}$ we can recover any categorical
distribution (we will discuss this in detail in
Section~\ref{sec:sblimits}. In the other limit, as~$\tau \to \infty$,
the density concentrates on a point in the interior of the simplex
determined by the parameters, and for intermediate values, the density
is continuous on the simplex.

Finally, note that the logistic-normal construction is only one
possible choice.  We could instead
let~${\beta_n \sim \mathrm{Beta}(\tfrac{a_n}{\tau},
  \tfrac{b_n}{\tau})}$. This would lead to a generalized Dirichlet
distribution on the simplex.  The beta distribution is slightly harder
to reparameterize since it is typically simulated with a rejection
sampling procedure, but~\citet{naesseth2017reparameterization} have
shown how this can be handled with a mix of reparameterization and
score-function gradients.  Alternatively, the beta distribution could
be replaced with the Kumaraswamy
distribution~\citep{kumaraswamy1980generalized}, which is quite
similar to the beta distribution but is easily reparameterizable.

\subsection{Rounding}
Rounding transformations also have a natural analog for discrete
variational inference.  Let~$e_n$ denote a one-hot vector with~$n$-th
entry equal to one.  Define the rounding operator,
\begin{align*}
  \mathsf{round}(\psi)
  &= e_{n^*},
\end{align*}
where
\begin{align*}
  n^* &= \argmin_{n} \|e_n - \psi \|^2 \\
  &= \argmax_n \psi_n.
\end{align*}
In the case of a tie, let~$n^*$ be the smallest index~$n$ such
that~$\psi_n > \psi_m$ for all~$m < n$. Rounding effectively
partitions the space into~$N$ disjoint ``Voronoi'' cells,
\begin{align*}
  V_n &= \Big \{ \psi \in \reals^N : \,
        \psi_n \geq \psi_m \, \forall m \; \wedge \;
        \psi_n > \psi_m \, \forall m < n
        \Big \}.
\end{align*}
By definition,~${\mathsf{round}(\psi) = e_{n^*}}$ for
all~${\psi \in V_{n^*}}$

We define a map that pulls points toward their rounded values,
\begin{align}
  \label{eq:round}
  x &=  \tau \psi + (1-\tau) \mathsf{round}(\psi).
\end{align}

\begin{proposition}
  \label{prop:round}
  For~${\tau \in [0,1]}$, the map defined by~\eqref{eq:round} moves
  points strictly closer to their rounded values so
  that~$\mathsf{round}(\psi) = \mathsf{round}(x)$.
\end{proposition}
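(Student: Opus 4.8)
The plan is to reduce everything to the argmax characterization of $\mathsf{round}$ recorded above, since that turns the claim into a comparison of individual coordinates. Write $n^* = \argmax_n \psi_n$, so that $\mathsf{round}(\psi) = e_{n^*}$ and $\psi \in V_{n^*}$; unfolding the definition of the Voronoi cell, this says $\psi_{n^*} \geq \psi_m$ for every $m$, with strict inequality whenever $m < n^*$. The entire goal is then to verify that $x = \tau \psi + (1-\tau) e_{n^*}$ again lies in $V_{n^*}$, which by definition of $\mathsf{round}$ is equivalent to $\mathsf{round}(x) = e_{n^*}$.

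First I would write out the coordinates of $x$ explicitly: the distinguished coordinate is $x_{n^*} = \tau \psi_{n^*} + (1-\tau)$, while every other coordinate is merely rescaled, $x_n = \tau \psi_n$ for $n \neq n^*$. The one computation that matters is the difference
\[
  x_{n^*} - x_n = \tau(\psi_{n^*} - \psi_n) + (1-\tau),
\]
which, using $\psi_{n^*} - \psi_n \geq 0$, is bounded below by $1-\tau$. For $\tau \in [0,1)$ this bound is strictly positive, so $n^*$ is the \emph{unique} maximizer of $x$; the tie-breaking clause in the definition of $\mathsf{round}$ is therefore never triggered, and $\mathsf{round}(x) = e_{n^*} = \mathsf{round}(\psi)$. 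The boundary value $\tau = 1$ is immediate, since then $x = \psi$.

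For the ``strictly closer'' half of the statement I would observe that the map is nothing but a contraction toward the rounded vertex: $x - e_{n^*} = \tau(\psi - e_{n^*})$, whence $\|x - e_{n^*}\| = \tau \, \|\psi - e_{n^*}\|$, which is strictly smaller than $\|\psi - e_{n^*}\|$ whenever $\tau < 1$ and $\psi \neq e_{n^*}$.

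I do not expect a genuine obstacle here; the only delicate point is the treatment of ties, where several coordinates of $\psi$ simultaneously attain the maximum. The argument above handles this automatically: even if $\psi_{n^*} = \psi_n$ for some $n \neq n^*$, the additive $(1-\tau)$ term still forces $x_{n^*} > x_n$ strictly for $\tau < 1$, so the map resolves the tie in favour of $n^*$ exactly as the tie-breaking rule prescribes, and membership in $V_{n^*}$ is preserved.
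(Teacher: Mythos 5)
Your proof is correct, but it takes a genuinely different route from the paper's. The paper's own argument is a one-line geometric observation: the Voronoi cells are intersections of halfspaces and hence convex, and since ${x = \tau\psi + (1-\tau)e_{n^*}}$ is a convex combination of $\psi$ and $e_{n^*}$, both of which lie in $V_{n^*}$, the point $x$ must lie in $V_{n^*}$ as well. You instead argue coordinate-wise: from ${x_{n^*} - x_n = \tau(\psi_{n^*}-\psi_n) + (1-\tau) \geq 1-\tau}$ you conclude that for ${\tau \in [0,1)}$ the index $n^*$ is the \emph{unique} maximizer of $x$, with the case ${\tau = 1}$ trivial. Both arguments are valid. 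Your computation buys three things the paper's does not make explicit: a quantitative margin (the leading coordinate of $x$ exceeds every other by at least ${1-\tau}$), an explicit resolution of the tie-breaking clause in the definition of $\mathsf{round}$, and the contraction identity ${\|x - e_{n^*}\| = \tau\,\|\psi - e_{n^*}\|}$, which is what actually justifies the ``strictly closer'' wording of the statement---a clause the paper's proof silently omits. Conversely, the paper's convexity argument buys generality: it applies verbatim to the rounding transformation toward permutation matrices in Section 3.2 of the main text, where the cells are Voronoi regions of the $N!$ permutation matrices under the Hungarian algorithm and no simple argmax coordinate formula is available, whereas your argument is specific to the simplex/argmax structure of the categorical case.
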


\begin{proof}
Note that the Voronoi cells are intersections of halfspaces and, as
such, are convex sets.  Since~$x$ is a convex combination of~$\psi$
and~$e_{n^*}$, both of which belong to the convex set~$V_{n^*}$,
$x$~must belong to~$V_{n^*}$ as well.
\end{proof}

Similarly,~$x$ will be a point on the simplex if an only if~$\psi$ is
on the simplex as well.  By analogy to the rounding transformations
for permutation inference, in categorical inference we use a Gaussian
distribution~${\psi \sim \distNormal(\mathsf{proj}(m), \nu)}$,
where~$\mathsf{proj}(m)$ is the projection of~$m \in \reals_+^N$ onto
the simplex.  Still, the simplex has zero measure under the Gaussian
distribution.  It follows that the rounded points~$x$ will almost
surely not be on the simplex either.  The supposition of this approach
is that this is not a problem: relaxing to the simplex is nice but not
required.

In the zero-temperature limit we obtain a discrete distribution on the
vertices of the simplex.  For~${\tau \in (0,1]}$ we have a
distribution on~${\mcX_\tau \subseteq \reals^N}$, the subset of the
reals to which the rounding operation maps. (For~${0 \leq \tau < 1}$
this is a strict subset of~$\reals^N$.) To derive the
density~$q(x)$, we need the inverse transformation and the
determinant of its Jacobian.  From Proposition~\ref{prop:round}, it
follows that the inverse transformation is given by,
\begin{align*}
  \psi &= \frac{1}{\tau} x - \frac{1 - \tau}{\tau} \mathsf{round}(x).
\end{align*}
As long as~$\psi$ is in the interior of its Voronoi cell,
the~$\mathsf{round}$ function is piecewise constant and the
Jacobian is~${\tfrac{\partial\psi}{\partial x} = \tfrac{1}{\tau} I}$,
and its determinant is~$\tau^{-N}$. Taken together, we have,
\begin{multline*}
  q(x; m, \nu) =  \\
  \tau^{-N} \distNormal \left(\frac{1}{\tau}x - \frac{1-\tau}{\tau} \mathsf{round}(x); \, \mathsf{proj}(m), \diag(\nu) \right) \\
  \times \bbI[x \in \mcX_\tau].
\end{multline*}
Compare this to the density of the rounded random variables for
permutation inference. 

\subsection{Limit analysis for stick-breaking}
\label{sec:sblimits}
We show that stick-breaking for discrete variational inference can
converge to any categorical distribution in the zero-temperature
limit.

  Let~${\beta=\sigma(\psi / \tau)}$
  with~${\psi\sim\mathcal{N}(\mu,\nu^2)}$.  In the
  limit~${\tau \to
    0}$ we have~${\beta \sim \distBernoulli(\Phi(-\tfrac{\mu}{\nu}))}$,
  where~$\Phi(\cdot)$ denotes the Gaussian cumulative distribution
  function (cdf). 
  %
  Moreover, when~${\beta_n \sim \distBernoulli(\rho_n)}$
  with~${\rho_n \in [0,1]}$ for~${n=1, \ldots, N}$, the random
  variable~$x$ obtained from applying the stick-breaking
  transformation to~$\beta$ will have an atomic distribution with atoms
  in the vertices of $\Delta_{N}$; i.e,
  ${x \sim \distCategorical(\pi)}$ where
  \begin{align*}
    \pi_1 &= \rho_1 \\
    \pi_n &=  \rho_n \prod_{m=1}^{n-1} (1-\rho_m)  \qquad
          n=2, \ldots, N-1, \\
    \pi_N &= \prod_{m=1}^{N-1} (1-\rho_m).
\end{align*}


These two facts, combined with the invertibility of the
stick-breaking procedure, lead to the following proposition

\begin{proposition}
  In the zero-temperature limit, stick-breaking of logistic-normal
  random variables can realize any categorical distribution on~$x$.
\end{proposition}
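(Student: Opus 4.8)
The plan is to show that for any target categorical distribution $\distCategorical(\pi)$ on the vertices of $\Delta_N$, I can choose logistic-normal parameters $\{(\mu_n, \nu_n)\}_{n=1}^{N-1}$ so that, in the limit $\tau \to 0$, the stick-breaking output $x$ has exactly that distribution. The two facts stated just above the proposition give me the ingredients: first, that $\beta_n = \sigma(\psi_n/\tau)$ with $\psi_n \sim \distNormal(\mu_n, \nu_n^2)$ converges to a $\distBernoulli(\Phi(-\mu_n/\nu_n))$ random variable as $\tau \to 0$; and second, that feeding Bernoulli-distributed sticks $\beta_n \sim \distBernoulli(\rho_n)$ through the stick-breaking map yields the categorical distribution with weights $\pi_1 = \rho_1$, $\pi_n = \rho_n \prod_{m<n}(1-\rho_m)$, and $\pi_N = \prod_{m<N}(1-\rho_m)$.

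First I would solve the inverse problem: given target weights $\pi_1, \ldots, \pi_N$, recover the Bernoulli parameters $\rho_n$ that produce them. This is the standard stick-breaking inversion, $\rho_1 = \pi_1$ and
\begin{align*}
  \rho_n &= \frac{\pi_n}{1 - \sum_{m=1}^{n-1}\pi_m} = \frac{\pi_n}{\prod_{m=1}^{n-1}(1-\rho_m)}, \qquad n = 2, \ldots, N-1,
\end{align*}
which is well-defined and lands in $[0,1]$ whenever the partial sums stay below one (handling the degenerate case where some prefix already exhausts the mass by setting the remaining $\rho_n$ arbitrarily, since those sticks carry zero probability). Next I would invert the first fact: to force $\beta_n \sim \distBernoulli(\rho_n)$ in the limit, I need $\Phi(-\mu_n/\nu_n) = \rho_n$, i.e. choose any $(\mu_n, \nu_n)$ with $\mu_n/\nu_n = -\Phi^{-1}(\rho_n)$. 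Since $\Phi$ is a bijection from $\reals$ onto $(0,1)$ and $\rho_n \in [0,1]$, such a ratio exists (with the boundary cases $\rho_n \in \{0,1\}$ absorbed by sending the ratio to $\pm\infty$, or simply noting those sticks are deterministic).

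Then I would chain the two facts together. With the parameters chosen above, the first fact gives $\beta_n \xrightarrow{d} \distBernoulli(\rho_n)$, and because the stick-breaking map is a fixed continuous (indeed deterministic and invertible) function of the finite vector $(\beta_1, \ldots, \beta_{N-1})$, the continuous mapping theorem transfers this convergence to $x$; the second fact then identifies the limiting law of $x$ as exactly $\distCategorical(\pi)$. The main obstacle I anticipate is the rigorous handling of the limit interchange and the boundary behavior: the first fact is a statement about the marginal limit of a single $\beta_n$, so I must justify that joint convergence of the independent $\beta_n$'s holds and that the deterministic stick-breaking transformation—whose lower and upper bounds and hence whose component maps could in principle be sensitive near the simplex boundary—behaves continuously enough for the limiting distribution to be read off term by term. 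Since the $\psi_n$ are independent across $n$ and the map is a fixed polynomial in the $\beta_n$, this should reduce to the continuous mapping theorem applied to the product measure, but I would state the independence and continuity carefully rather than treat it as automatic.
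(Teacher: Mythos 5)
Your proof is correct and follows essentially the same route as the paper's: invert the stick-breaking recursion to obtain Bernoulli parameters $\rho_1 = \pi_1$, $\rho_n = \pi_n / \prod_{m<n}(1-\rho_m)$ from the target $\pi$, then choose Gaussian parameters with $\Phi(-\mu_n/\nu_n) = \rho_n$ so that the zero-temperature limit realizes those Bernoullis and hence $\distCategorical(\pi)$. The additional rigor you supply---joint convergence of the independent $\beta_n$'s, the continuous mapping theorem, and the boundary cases $\rho_n \in \{0,1\}$---tightens the paper's more informal concluding step but does not constitute a different approach.
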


\begin{proof}
  There is a one-to-one correspondence between~${\pi \in \Delta_N}$
  and~${\rho \in [0,1]^{N-1}}$.  Specifically,
  \begin{align*}
    \rho_1 &= \pi_1 \\
    \rho_n &= \frac{\pi_n}{\prod_{m=1}^{n=1} 1-\rho_m}
             \quad \text{for } n = 2, \ldots, N-1.
  \end{align*}
  Since these are recursively defined, we can substitute the
  definition of~$\rho_m$ to obtain an expression for~$\rho_n$ in terms
  of~$\pi$ only.  Thus, any desired categorical distribution~$\pi$
  implies a set of Bernoulli parameters~$\rho$.  In the zero
  temperature limit, any desired~$\rho_n$ can be obtained with
  appropriate choice of Gaussian mean~$\mu_n$ and
  variance~$\nu_n^2$. Together these imply that stick-breaking can
  realize any categorical distribution when~${\tau \to 0}$.
\end{proof}

\subsection{Variational Autoencoders (VAE) with categorical latent variables}
\label{sec:vae}

We considered the density estimation task on MNIST digits, as in
\citet{maddison2016concrete, jang2016categorical}, where observed
digits are reconstructed from a latent discrete code. We used the
continuous ELBO for training, and evaluated performance based on the
marginal likelihood, estimated with the variational
objective of the discretized model.  We compared against the methods of
\cite{jang2016categorical, maddison2016concrete} and obtained the
results in Table~\ref{tab:vae}.  While stick-breaking and rounding
fare slightly worse than the Gumbel-softmax method, they are readily
extensible to more complex discrete objects, as shown in the main
paper.

\begin{table}[h]
  \caption{Summary of results in VAE}
  \label{tab:vae}
  \centering
  \begin{tabular}{ll}
    \textbf{Method} & $- \log p(x)$ \\
    \hline
    Gumbel-Softmax    & 106.7 \\
    Concrete  &  111.5\\
    Rounding &  121.1 \\
    Stick-breaking & 119. 8\\
    \bottomrule
  \end{tabular}
\end{table}

   

Figure~\ref{fig:VAE} shows MNIST reconstructions using Gumbel-Softmax,
stick-breaking and rounding reparameterizations. In all the three
cases reconstructions are reasonably accurate, and there is diversity
in reconstructions.
\begin{figure*}[t]
  \centering
  \includegraphics[width=5.in]{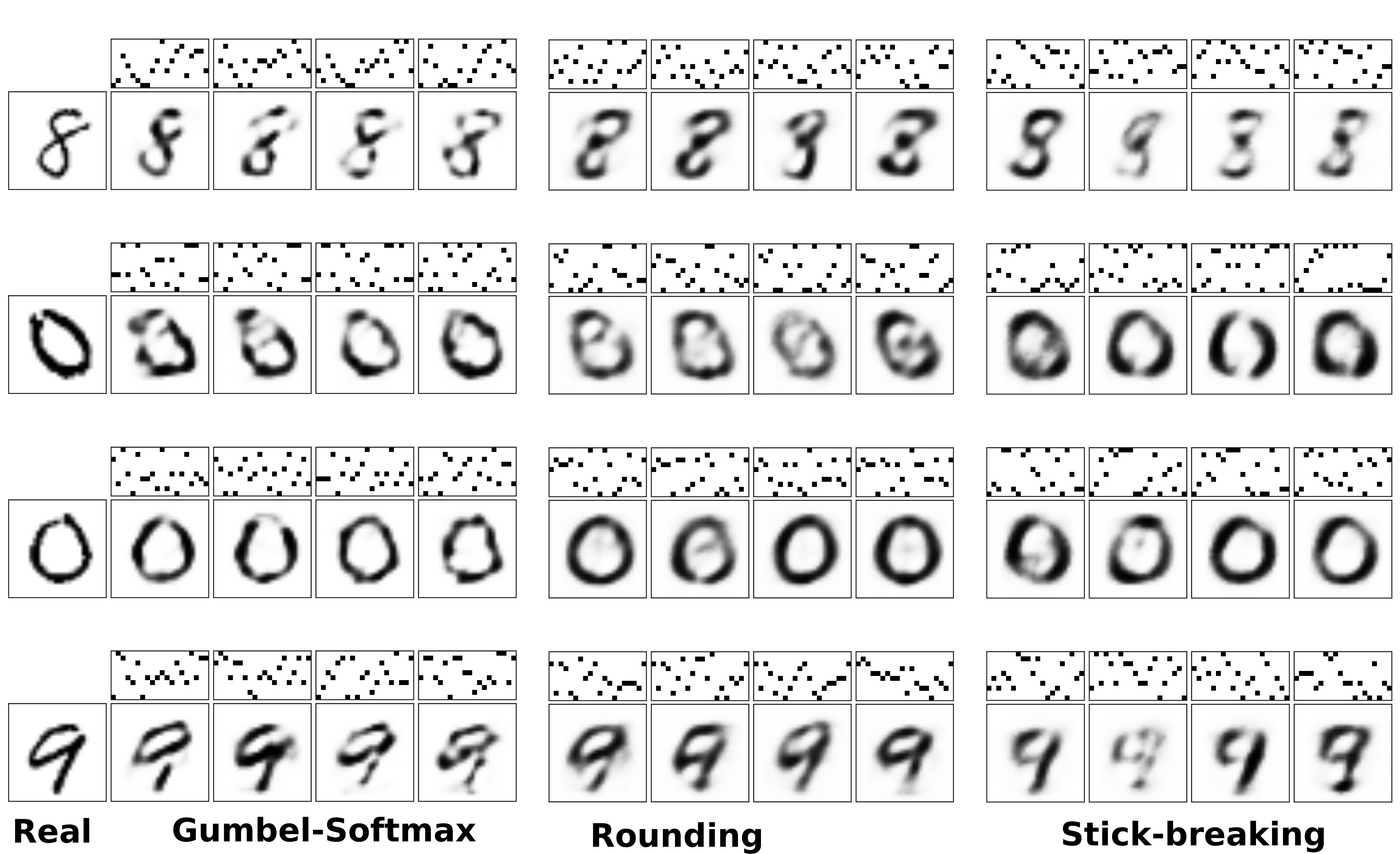} 
  \caption{\textit{Examples of true and reconstructed digits from their
    corresponding discrete latent variables.} The real input image is
    shown on the left, and we show sets of four samples from the
    posterior predictive distribution for each discrete variational
    method: Gumbel-softmax, rounding, and stick-breaking.  Above each
    sample we show the corresponding sample of the discrete latent
    ``code.''  The random codes consist of of~$K=20$ categorical
    variables with $N=10$ possible values each.  The codes are shown
    as~${10 \times 20}$~binary matrices above each image.}
\label{fig:VAE}
\end{figure*}

\section{Variational permutation inference details}
\label{sec:details}

Here we discuss more of the subtleties of variational permutation
inference and present the mathematical derivations in more detail. 

\subsection{Continuous prior distributions.} 
Continuous relaxations require re-thinking the objective: the model
log-probability is defined with discrete latent variables, but our
relaxed posterior is a continuous density. As in
\cite{maddison2016concrete}, we instead maximize a relaxed ELBO.  We
assume the functional form of the likelihood remains unchanged, and
simply accepts continuous values instead of discrete. However, we need
to specify a new continuous prior $p(X)$ over the relaxed discrete
latent variables, here, over relaxations of permutation matrices. It
is important that the prior be sensible: ideally, the prior should
penalize values of~$X$ that are far from permutation matrices.

For our categorical experiment on MNIST we use a mixture of Gaussians
around each
vertex,~${p(x) = \tfrac{1}{N} \sum_{n=1}^N \mathcal{N}(x \given e_k,
  \eta^2)}$.  This can be extended to permutations, where we use a
mixture of Gaussians for each coordinate,
\begin{align}
\label{eq:permprior}
  p(X) &= \prod_{m=1}^N \prod_{n=1}^N
  \frac{1}{2} \left(\mathcal{N}(x_{mn} \given 0, \eta^2) + \distNormal(x_{mn} \given 1, \eta^2 \right).
\end{align}
Although this prior puts significant mass around invalid points
(e.g.~${(1, 1, \ldots, 1)}$), it penalizes~$X$ that are far from~$\mcB_N$.

\subsection{Computing the ELBO}
Here we show how to evaluate the ELBO.  Note that the stick-breaking
and rounding transformations are compositions of invertible
functions,~${g_\tau = h_\tau \circ f}$ with ${\Psi = f(\noise; \theta)}$ and
${X = h_\tau(\Psi)}$.  In both cases,~$f$ takes in a matrix of
independent standard Gaussians~$(\noise)$ and transforms it with the
means and variances in~$\theta$ to output a matrix~$\Psi$ with
entries~${\psi_{mn} \sim \distNormal(\mu_{mn}, \nu^2_{mn})}$.
Stick-breaking and rounding differ in the temperature-controlled
transformations~$h_\tau(\Psi)$ they use to map~$\Psi$ toward the
Birkhoff polytope.

To evaluate the ELBO, we must compute the density
of~$q_\tau(X; \theta)$.
Let~${J_{h_\tau}(u) = \frac{\partial h_\tau(U)}{\partial U}
  \big|_{U=u}}$ denote the Jacobian of a function~$h_\tau$ evaluated
at value~$u$. By the change of variables theorem and properties of the
determinant,
\begin{align*}
  q_\tau(X; \theta)
  &= p \big(h_\tau^{-1}(X) ;\theta \big)
    \times \big| J_{h_\tau^{-1}}(X) \big|
  \\
  &= p \big(h_\tau^{-1}(X); \theta \big)
    \times \big| J_{h_\tau}(h_\tau^{-1}(X)) \big|^{-1}.
\end{align*}
Now we appeal to the law of the unconscious statistician to compute
the entropy of~$q_\tau(X; \theta)$,
\begin{align}
  \label{eq:elbo2}
  \nonumber \E_{q_\tau(X; \theta)} & \Big[- \log q(X; \theta) \Big] \\
  \nonumber &= \E_{p(\Psi;\theta)}
              \Big[ - \log p(\Psi;\theta) +
              \log \left| J_{h_\tau}(\Psi) \right| \Big] \\
  &= \bbH(\Psi; \theta)  +
  \E_{p(\Psi;\theta)} \Big[ \left| J_{h_\tau}(\Psi) \right| \Big].
\end{align}
Since~$\Psi$ consists of independent Gaussians with
variances~$\nu_{mn}^2$, the entropy is simply,
\begin{align*}
  \bbH(\Psi; \theta) &=  \frac{1}{2} \sum_{m,n} \log(2 \pi e \nu_{mn}^2).
\end{align*}
We estimate the second term of equation \eqref{eq:elbo2} using
Monte-Carlo samples. For both transformations, the Jacobian has a
simple form.

\paragraph{Jacobian of the stick-breaking transformation.}
Here~$h_\tau$ consists of two steps:
map~${\Psi \in \reals^{N-1 \times N-1}}$
to~$B \in [0,1]^{N-1 \times N-1}$ with a temperature-controlled,
elementwise logistic function, then map~$B$ to~$X$ in the Birkhoff
polytope with the stick-breaking transformation.

As with the standard stick-breaking transformation to the simplex,
our transformation to the Birkhoff polytope is feed-forward;
i.e. to compute~$x_{mn}$ we only need to know the values of~$\beta$
up to and including the~$(m,n)$-th entry. Consequently, the
Jacobian of the transformation is triangular, and its determinant
is simply the product of its diagonal.

We derive an explicit form in two steps. With a slight abuse of
notation, note that the Jacobian of~$h_\tau(\Psi)$ is given
by the chain rule,
\begin{align*}
  J_{h_\tau}(\Psi)
  &= \frac{\partial X}{\partial  \Psi}
    = \frac{\partial X}{\partial B} \frac{\partial B}{\partial \Psi}.
\end{align*}
Since both transformations are bijective, the determinant is,
\begin{align*}
  \big| J_{h_\tau}(\Psi) \big|
  &= \left| \frac{\partial X}{\partial B} \right| \,
     \left| \frac{\partial B}{\partial \Psi} \right|.
\end{align*}
the product
of the individual determinants.  The first determinant is,
\begin{align*}
  \left| \frac{\partial X}{\partial B} \right|
  &= \prod_{m=1}^{N-1} \prod_{n=1}^{N-1} \frac{\partial x_{mn} }{\partial {\beta}_{mn}} 
  = \prod_{m=1}^{N-1} \prod_{n=1}^{N-1} (u_{mn} - \ell_{mn}).
\end{align*}
The second transformation, from~$\Psi$ to~$B$, is an element-wise,
temperature-controlled logistic transformation such that,
\begin{align*}
  \left| \frac{\partial B}{\partial \Psi} \right| 
  &= \prod_{m=1}^{N-1} \prod_{n=1}^{N-1} \frac{\partial \beta_{mn}}{\partial \psi_{mn}} \\
  &= \prod_{m=1}^{N-1} \prod_{n=1}^{N-1}
    \frac{1}{\tau} \sigma \left(\psi_{mn} / \tau \right)
    \sigma \left(-\psi_{mn} / \tau \right).
\end{align*}



It is important to note that the transformation that maps
$B \rightarrow X$ is only piecewise continuous: the function is not
differentiable at the points where the bounds change; for example,
when changing~$B$ causes the active upper bound to switch from the row
to the column constraint or vice versa.  In practice, we find
that our stochastic optimization algorithms still perform reasonably
in the face of this discontinuity.

\paragraph{Jacobian of the rounding transformation.}
The rounding transformation is given in matrix form
in the main text, and we restate it here in coordinate-wise form
for convenience,
\begin{align*}
  x_{mn} = [h_\tau(\Psi)]_{mn} &= \tau \psi_{mn} + (1-\tau) [\mathsf{round}(\Psi)]_{mn}.
\end{align*}
This transformation is piecewise linear with jumps at the boundaries
of the ``Voronoi cells;'' i.e., the points where~$\mathsf{round}(X)$
changes. The set of discontinuities has Lebesgue measure zero so the
change of variables theorem still applies.  Within each Voronoi cell,
the rounding operation is constant, and the Jacobian is,
\begin{align*}
  \log \big| J_{h_\tau}(\Psi) \big| = \sum_{m,n} \log \tau = N^2 \log \tau.
\end{align*}
For the rounding transformation with given temperature, the Jacobian
is constant.

\section{Experiment details}
We used Tensorflow \citep{Abadi2016} for the VAE experiments, slightly
changing the code made available from \cite{jang2016categorical}. For
experiments on synthetic matching and the C. elegans example we used
Autograd \citep{maclaurin2015autograd}, explicitly avoiding
propagating gradients through the non-differentiable~$\mathsf{round}$
operation, which requires solving a matching problem.

We used ADAM~\citep{kingma2014adam} with learning rate 0.1 for
optimization. For rounding, the parameter vector $V$ defined in
\ref{sub:rounding} was constrained to lie in the interval
$[0.1, 0.5]$. Also, for rounding, we used ten iterations of the
Sinkhorn-Knopp algorithm, to obtain points in the Birkhoff
polytope. For stick-breaking the variances $\nu$ defined in
\ref{sub:stickbreaking} were constrained between $10^{-8}$ and~$1$. In
either case, the temperature, along with maximum values for the noise
variances were calibrated using a grid search.
 
In the C. elegans example we considered the symmetrized version of the
adjacency matrix described in \citep{varshney2011structural}; i.e. we
used $A'=(A+A^\top)/2$, and the matrix $W$ was chosen antisymmetric,
with entries sampled randomly with the sparsity pattern dictated by
$A'$. To avoid divergence, the matrix $W$ was then re-scaled by 1.1
times its spectral radius. This choice, although not essential,
induced a reasonably well-behaved linear dynamical system, rich in
non-damped oscillations. We used a time window of $T=1000$ time
samples, and added spherical standard noise at each time. All results
in Figure \ref{fig:elegantresults} are averages over five experiment
simulations with different sampled matrices $W$. For results in Figure
\ref{fig:elegantresults}b we considered either one or four worms
(squares and circles, respectively), and for the x-axis we used the values
$\nu \in \{0.0075,0.01,0.02,0.04,0.05\}$. We fixed the number of known neuron
identities to 25 (randomly chosen). For results in Figure
\ref{fig:elegantresults}c we used four worms and considered two values
for $\nu$; 0.1 (squares) and 0.05 (circles). Different x-axis values
correspond to fixing 110, 83, 55 and 25 neuron identities.

\end{document}